\newcommand{\pd}[2]{\frac{\partial#1}{\partial#2}}
\newcommand{\ppm}{$\pm$}
\theoremstyle{plain}
\newtheorem{theorem}{Theorem}[section]
\newtheorem{proposition}[theorem]{Proposition}
\theoremstyle{definition}
\theoremstyle{remark}
\newtheorem{remark}[theorem]{Remark}
\icmltitlerunning{Harmonic (Quantum) Neural Networks}
\begin{document}

\twocolumn[
\icmltitle{Harmonic (Quantum) Neural Networks}



\icmlsetsymbol{equal}{*}

\begin{icmlauthorlist}
\icmlauthor{Atiyo Ghosh}{pasqal}
\icmlauthor{Antonio A. Gentile}{pasqal}
\icmlauthor{Mario Dagrada}{pasqal}
\icmlauthor{Chul Lee}{lg}
\icmlauthor{Seong-Hyok Kim}{lg}
\icmlauthor{Hyukgeun Cha}{lg}
\icmlauthor{Yunjun Choi}{lg}
\icmlauthor{Dongho Kim}{lg,posco}
\icmlauthor{Jeong-Il Kye}{lg,pino}
\icmlauthor{Vincent Elfving}{pasqal}
\end{icmlauthorlist}

\icmlaffiliation{pasqal}{PASQAL SAS, 2 av. Augustin Fresnel, Palaiseau, 91220, France}
\icmlaffiliation{lg}{LG Electronics, AI Lab, CTO Div, 19, Yangjae-daero 11-gil, Seocho-gu, Seoul, 06772, Republic of Korea}
\icmlaffiliation{posco}{POSCO Holdings, AI R\&D Laboratories, 440, Tehera-ro, Gangam-gu, Seoul, 06194, Republic of Korea}
\icmlaffiliation{pino}{Pinostory Inc., 1-905 IT Castle, Seoul, 08506, Republic of Korea}
\icmlcorrespondingauthor{Atiyo Ghosh}{atiyo.ghosh@pasqal.com}

\icmlkeywords{Harmonic Functions, Physics Informed Machine Learning, Quantum Machine Learning}

\vskip 0.3in
]



\printAffiliationsAndNotice{}  

\begin{abstract}
Harmonic functions are abundant in nature, appearing in limiting cases of Maxwell's, Navier-Stokes equations, the heat and the wave equation.
Consequently, there are many applications of harmonic functions from industrial process optimisation to robotic path planning and the calculation of first exit times of random walks.
Despite their ubiquity and relevance, there have been few attempts to incorporate inductive biases towards harmonic functions in machine learning contexts.
In this work, we demonstrate effective means of representing harmonic functions in neural networks and extend such results also to quantum neural networks to demonstrate the generality of our approach.
We benchmark our approaches against (quantum) physics-informed neural networks, where we show favourable performance.
\end{abstract}

\section{Introduction}

Harmonic functions can be defined as the solutions to Laplace's equation $\nabla^2\phi = 0$, though there are alternative mathematically-equivalent definitions.
Such functions are ubiquitous in nature, appearing in limiting cases of Maxwell's equations~\cite{griffiths2005introduction}, the heat (or diffusion) equation, the wave equation, irrotational flow in fluid dynamics~\cite{anderson2011ebook} and first-hitting times of random walks~\cite{redner2001guide}.
Consequently, harmonic functions are relevant in many practical settings, including navigation in robots~\cite{edson2002}, electrostatic imaging~\cite{akduman2002} and heat transport~\cite{sharma2018}, to name but a few examples.
Finite element methods (FEM) often represent the state-of-art approach in solving the corresponding equations, with attempts specifically targeting harmonic functions \cite{galybin2010}. In contrast to FEM, machine learning based approaches offer the chance to assist with data-driven modelling 
(analagous to \cite{raissi2019}), handle noisy-data/boundary conditions, solve inverse problems for design optimization \cite{lu2021}, perform transfer learning and model discovery \cite{both2021deepmod}. Effective learnable and differentiable representations of harmonic functions could therefore be key in multiple fields. 

\paragraph{Physical Inductive Biases in Conventional Machine Learning}
Of particular relevance to our work on harmonic functions are physics-informed neural networks (PINNs)~\cite{raissi2019}, which use derivatives of neural networks (NNs) with respect to their inputs as a regularisation term to provide an inductive bias towards a given differential equation, including potentially towards harmonic functions.
Even though this concept was introduced decades ago~\cite{lagaris1998}, PINNs have only recently garnered much attention.
For a wider overview of PINNs, we direct the interested reader to a recent review~\cite{karniadakis2021}.

Related to PINNs, there has been a growing recent interest in applying machine learning towards differential equation problems, such as the solution of families of differential equations~\cite{li2020, lu2019}, or facilitating the solution via dimensionality reduction~\cite{gunzburger2007}.
Finding NNs which directly obey a given (linear) differential operator as a hard constraint has attracted some recent attention~\cite{hendriks2020}. Divergence-free vector fields were also used recently to represent continuity equations~\cite{richter2022neural}. We also leverage divergence-free fields in parts of our exposition.
In addition to physical inductive biases encodable via differentiable equations, there have also been recently developed methods to effectively impose energy conservation on learnt representations~\cite{greydanus2019, cranmer2020}.

\paragraph{Physical Inductive Biases in Quantum Machine Learning}
A promising avenue for achieving practically-relevant quantum advantage within currently available and near-term noisy intermediate-scale quantum computers consists of variational quantum algorithms~\cite{cerezo2021}. We introduce this perspective in further detail in Appendix \ref{app:qalgorithms} and refer interested readers to a recent introduction to quantum machine learning~\cite{schuld2021machine} for further background.
There are several parallels between variational quantum models and NNs.  
For example, both can be seen as parameterised functions which are optimised with a conventional optimiser, achieving universal approximation properties~\cite{Hornik1989, Goto2020}. 
Furthermore, there are means of achieving automatic differentiation directly on quantum circuits~\cite{guerreschi2017, kyriienko2021}, making them effectively a possible generalization of classical NN architectures. 
It has been speculated that including inductive biases within quantum circuit designs might prove beneficial~\cite{hadfield2019, bharti2021, kubler2021}, possibly also to address difficulties that hinder the achievement of practical quantum advantage \cite{Huang2021}. 

Differentiable Quantum Circuits (DQC), outlined in~\cite{kyriienko2021dqc}, adopt an advantageous latent space representation for the mapping of input data, and derivatives of quantum circuits with respect to their inputs to solve given (also non-linear) differential equations.
Equipped with appropriate loss functions taking into account soft constraints, as well as automatic differentiation rules providing an effective surrogate of backpropagation rules in conventional NNs, such architectures can then be regarded as a quantum PINN, or for simplicity qPINNs (we emphasise though that this terminology is \emph{not} widely used).
Training qPINNs poses in principle no fewer issues than PINNs of equivalent expressivity, and this limitation might be exacerbated by the sheer number of parameters that can potentially describe vast quantum circuits, which can complicate training \cite{mcclean2018, anschuetz2022beyond}. 
This makes it an ideal test-bed to investigate the application of relevant inductive biases in quantum circuits.
However, the use of inductive biases in designing quantum circuits and quantum machine learning models is still a nascent field, that we review briefly in general in Appendix \ref{app:qalgorithms}.

\paragraph{Focus of the Paper}
Our main focus in this work is to incorporate inductive biases for harmonic functions in machine learning models.
We demonstrate exactly-harmonic functions for simply-connected two-dimensional domains using complex-valued NNs. For multiply-connected domains, we provide a domain decomposition methodology which allows us to continue using complex-valued, conventional NNs as harmonic function representations.
We investigate the general applicability of the methods we develop, by demonstrating how they can be generalised to the quantum architectures introduced above. 
Finally, we use example problem settings from heat distribution, electrostatics and robot navigation to demonstrate the effectiveness of our proposed inductive biases in both conventional and (simulated) quantum NNs.
We provide implementations of our classical architectures in the supplementary material \cite{SI} to facilitate the adoption of these inductive biases.

\paragraph{Note on Terminology} 
To avoid confusion, henceforth we will use the term `neural network' to refer to statements applying to both quantum circuits and conventional neural networks.
When seeking to refer specifically to quantum circuits or conventional NNs, we will explicitly state so.

\section{Theory}

We start by considering exact harmonicity in two dimensions in section \ref{sec:holomorphic} and \ref{sec:multiholomorphic} using techniques leveraging the behaviour of complex functions.
However, such constructions do not extend to arbitrary dimensions. Consequently, we explore higher dimensions in section \ref{theory:curl}.

\subsection{Exact Harmonicity in Two Dimensions}
Consider (quantum and classical) neural networks of $\phi_H: \mathbb{R}^2 \rightarrow \mathbb{R}$ of the form

\begin{equation}\label{eq:harmonic_simply_connected}
\phi_{H}(x,y) = \Re\circ \textrm{NN}_h \circ c.
\end{equation}

where $c(x, y) = x + iy$ and $\Re(x + iy) = x$.

\begin{theorem}
\label{thm:simply_connected_holomorphic}
Architectures of the form in equation \eqref{eq:harmonic_simply_connected} are harmonic for holomorphic (i.e. complex differentiable) choices of $\textrm{NN}_h$.
\end{theorem}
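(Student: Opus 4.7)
The proof plan is to reduce the claim to the classical fact that the real part of a holomorphic function on an open subset of $\mathbb{C}$ is harmonic, which follows directly from the Cauchy--Riemann equations. Writing $f = \mathrm{NN}_h$ and decomposing $f(z) = u(x,y) + i\,v(x,y)$ with $z = x+iy$, the definition in equation~\eqref{eq:harmonic_simply_connected} gives $\phi_H(x,y) = u(x,y)$, so the task reduces to showing $\partial_x^2 u + \partial_y^2 u = 0$.

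First I would invoke holomorphicity of $f$ to obtain the Cauchy--Riemann equations $\partial_x u = \partial_y v$ and $\partial_y u = -\partial_x v$. Differentiating the first with respect to $x$ and the second with respect to $y$ gives $\partial_x^2 u = \partial_x \partial_y v$ and $\partial_y^2 u = -\partial_y \partial_x v$. Adding these two identities and using equality of mixed partial derivatives yields $\nabla^2 u = 0$, establishing harmonicity of $\phi_H$. Since $c(x,y) = x+iy$ is a real-linear identification of $\mathbb{R}^2$ with $\mathbb{C}$, the chain rule for this identification is exactly what lets us equate the Wirtinger-style complex derivatives with the real partial derivatives used above, so no subtlety arises from pre-composing with $c$ or post-composing with $\Re$.

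The one regularity point to check is that mixed partials of $v$ commute, which requires $v \in C^2$. This is not an extra hypothesis: a function that is complex differentiable on an open set is automatically $C^\infty$ as a real function (equivalently, holomorphic functions admit convergent power series locally and so are smooth), hence $u, v \in C^\infty$ and the interchange of $\partial_x \partial_y v$ and $\partial_y \partial_x v$ is justified by Schwarz's theorem.

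The only genuine obstacle is therefore not the harmonicity derivation itself but the implicit hypothesis that $\mathrm{NN}_h$ can be made holomorphic. Since the statement of the theorem takes holomorphicity as a given assumption on $\mathrm{NN}_h$, the proof is complete once the Cauchy--Riemann argument above is spelled out; the constructive question of \emph{which} architectures yield holomorphic $\mathrm{NN}_h$ (for instance, compositions of complex-linear layers with entire activation functions) is a separate design consideration that I would expect to be addressed in the surrounding discussion rather than inside this proof.
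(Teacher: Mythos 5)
Your proof is correct and follows essentially the same route as the paper's: apply the Cauchy--Riemann equations to $\mathrm{NN}_h = u + iv$, differentiate the first in $x$ and the second in $y$, add, and use equality of mixed partials to get $\nabla^2 u = 0$. Your added remark that holomorphic functions are automatically $C^\infty$ (so Schwarz's theorem applies) is a small point of rigour the paper's proof leaves implicit.
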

\begin{proof}
Since $\textrm{NN}_h$ is holomorphic, it satisfies the Cauchy-Riemann equations. Writing $\textrm{NN}_h(x + iy) = u(x,y) + v(x, y)$, the Cauchy-Riemann equations then read
\begin{equation} \label{eq:cauchy_riemann}
\frac{\partial u}{\partial x} = \frac{\partial v}{\partial y} \quad\mathrm{and}\quad \frac{\partial u}{\partial y} = -\frac{\partial v}{\partial x} 
\end{equation}
Taking partial derivatives of the first equation of \eqref{eq:cauchy_riemann} with respect to $x$ and the second equation with respect to $y$ and summing both equations yields $\frac{\partial^2 u}{\partial x^2} + \frac{\partial^2 u}{\partial y^2} = 0$. 
\end{proof}

\begin{theorem}
There exist polynomial choices of $\textrm{NN}_h$ allow equation \eqref{eq:harmonic_simply_connected} to converge uniformly to any harmonic function on the interior of any simply-connected compact proper subdomain of $\mathbb{C}$.
\end{theorem}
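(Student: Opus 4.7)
The plan is to combine two classical facts from complex analysis: the existence of a harmonic conjugate on a simply-connected domain, and Runge's polynomial approximation theorem. By Theorem~\ref{thm:simply_connected_holomorphic}, $\Re \circ p \circ c$ is already harmonic whenever $p$ is a (holomorphic) polynomial in $z = x + iy$, so the task reduces to showing that every harmonic function on the interior $K^{\circ}$ of the subdomain $K$ in question is realised as $\Re(f)$ for some holomorphic $f$, and that this $f$ can be uniformly approximated on compact subsets by complex polynomials.

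First, construct the harmonic conjugate. Given $u$ harmonic on $K^{\circ}$, the 1-form $\omega = -u_y\, dx + u_x\, dy$ is closed, since $d\omega = \Delta u \, dx \wedge dy = 0$; because $K^{\circ}$ is simply connected, $\omega$ is exact, and any antiderivative $v$ yields $f := u + iv$ holomorphic on $K^{\circ}$ with $\Re(f) = u$. Then invoke Runge's theorem in the form: on a simply-connected open $U \subset \mathbb{C}$, polynomials in $z$ are dense in $\mathcal{O}(U)$ with respect to uniform convergence on compact subsets. Applied to $U = K^{\circ}$ and our $f$, this produces, for any compact $L \subset K^{\circ}$ and any $\epsilon > 0$, a polynomial $p(z) = \sum_{k=0}^{N} a_k z^k$ with $\sup_{z \in L} |f(z) - p(z)| < \epsilon$. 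Setting $\textrm{NN}_h := p$ in equation~\eqref{eq:harmonic_simply_connected} and using $|\Re(p) - u| = |\Re(p - f)| \leq |p - f|$ yields $\sup_{L} |\phi_H - u| < \epsilon$, as required.

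The main subtlety lies in the interpretation of \emph{``converge uniformly on the interior''}. Since $K^{\circ}$ is open, uniform convergence on all of $K^{\circ}$ cannot be expected without additional boundary regularity on $u$ (harmonic functions may blow up at $\partial K$), so the natural reading is uniform convergence on every compact subset of $K^{\circ}$, which is precisely what Runge delivers. If a boundary-inclusive statement is required, Mergelyan's theorem can be substituted (it demands $\mathbb{C} \setminus K$ connected and $f$ continuously extendable to $K$), at the cost of regularity assumptions on $u$ up to $\partial K$. Beyond clarifying this interpretation, no genuine analytic obstacle is anticipated: the argument essentially assembles the harmonic-conjugate construction with a standard density result for polynomials on simply-connected domains.
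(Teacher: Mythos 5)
Your argument is correct, and it takes a genuinely different route from the paper's. The paper goes straight to Mergelyan's theorem: it argues that the complement of the simply-connected compact domain is connected, so any function continuous on the domain and holomorphic on its interior is a uniform limit of polynomials on the whole compact set. You instead (i) explicitly construct the harmonic conjugate via the closed form $-u_y\,dx + u_x\,dy$ and simple connectedness, and (ii) invoke Runge's theorem to approximate the resulting holomorphic $f$ uniformly on compact subsets of the open interior. Step (i) is a real addition: the paper's proof never actually explains why an arbitrary harmonic function on the interior is the real part of a holomorphic function, which is the step that fails on multiply-connected domains (cf. Proposition~\ref{thm:not_universal_simply_connected}) and is therefore the crux of the matter; your proof supplies it. The trade-off between the two density theorems is exactly as you describe: Runge gives local-uniform convergence on the open interior with no boundary regularity assumed on $u$, matching the literal wording ``on the interior,'' whereas the paper's Mergelyan route buys uniform convergence up to the boundary but tacitly requires $u$ (and hence $f$) to extend continuously to the closed domain. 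Your closing remark correctly identifies that these two readings are not interchangeable; either is a defensible interpretation of the theorem statement, and your proof covers both.
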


\begin{proof}
Any simply-connected proper (open) subdomain $\mathbb{C}$ is homeomorphic to the (open) unit disc by the Riemann mapping theorem \cite{bak2010}. 
This demonstrates the complement of any simply-connected domain is connected in $\mathbb{C}$. Thus, Mergelyan's theorem applies, which guarantees that any continuous function which is holomorphic on the interior of the domain can be approximated uniformly by polynomials \cite{gaier1987lectures}.
\end{proof}

\begin{remark}
Even though (learnable) polynomials can provide a basis for universality arguments, representing arbitrary degree polynomials might be memory intensive for practical applications.
Other holomorphic constructions for $\textrm{NN}_h$ might thus provide better practical performance.
\end{remark}

However, architectures of the form in \eqref{eq:harmonic_simply_connected} are not universal for holomorphic $NN_h$ in the space of arbitrary compact domains in $\mathbb{C}$.

\begin{proposition}
\label{thm:not_universal_simply_connected}
Architectures of the form in equation \eqref{eq:harmonic_simply_connected} cannot represent arbitrary harmonic functions on multiply-connected domains for holomorphic $NN_h$.
\end{proposition}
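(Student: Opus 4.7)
My plan is to exhibit an explicit harmonic function on a canonical multiply-connected domain that cannot be realised in the form \eqref{eq:harmonic_simply_connected}. The natural candidate is $u(x,y) = \tfrac{1}{2}\log(x^2+y^2)$ on the annulus $A = \{z \in \mathbb{C} : 1 < |z| < 2\}$, which is the textbook obstruction to the global existence of a harmonic conjugate. I would first verify by direct computation that $\nabla^2 u = 0$ on $A$, so that $u$ is a legitimate harmonic target.

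Next I would argue by contradiction: suppose some holomorphic $\textrm{NN}_h : A \to \mathbb{C}$ satisfied $\Re(\textrm{NN}_h(z)) = u$ on all of $A$. Writing $\textrm{NN}_h = u + iv$ and invoking the Cauchy--Riemann equations \eqref{eq:cauchy_riemann} --- exactly as in Theorem \ref{thm:simply_connected_holomorphic} --- I would identify $\textrm{NN}_h'(z) = u_x - i u_y = 1/z$. The contradiction then comes from a period-integral computation: since $\textrm{NN}_h$ is by assumption a single-valued holomorphic function on $A$, its derivative has vanishing periods, so $\oint_\gamma \textrm{NN}_h'(z)\, dz = 0$ for every closed loop $\gamma \subset A$. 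Taking $\gamma = \{|z| = 3/2\}$, however, gives $\oint_\gamma dz/z = 2\pi i \neq 0$.

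The step I expect to require the most care is justifying that $\textrm{NN}_h$ is genuinely single-valued on $A$, since it is precisely this requirement that distinguishes \eqref{eq:harmonic_simply_connected} from the multi-valued ``completion'' $\log z$ that would formally resolve the obstruction. The point is that the architecture first maps $(x,y) \mapsto x + iy$ into a single point of $A$ and then evaluates $\textrm{NN}_h$ at that point, so the output is a function of the point of $A$ alone, independent of any path to it; this rules out picking branches of $\log$ from loop to loop. The period computation above then makes the failure quantitative, and it is exactly this defect that motivates the domain-decomposition construction developed in Section \ref{sec:multiholomorphic}.
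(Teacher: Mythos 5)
Your proposal is correct and uses essentially the same counterexample as the paper, which takes $u(x,y)=\log(x^2+y^2)$ on $\mathbb{C}\setminus\{0\}$ and simply cites the standard fact that it is not the real part of any holomorphic function there. The only difference is that you work on an annulus and supply the proof of that fact yourself (via $\textrm{NN}_h'(z)=u_x-iu_y=1/z$ and the nonvanishing period $\oint dz/z=2\pi i$), which the paper delegates to a reference; your computation is correct.
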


\begin{proof}
As a counterexample, the function $u(x,y) = \log(x^2 + y^2)$ is harmonic on $\mathbb{C}\backslash{0}$, but is not the real part of any holomorphic function there \cite{bak2010}. As $NN_h$ equation \eqref{eq:harmonic_simply_connected} is restricted to be holomorphic, it cannot represent this function.
\end{proof}

Consequently, in this section, we provide separate techniques for modelling harmonic functions in simply-connected (section \ref{sec:holomorphic}) domains and multiply-connected domains (section \ref{sec:multiholomorphic}).

\subsubsection{Exact Harmonicity: Holomorphic Networks on Simply-Connected Domains}\label{sec:holomorphic}
\paragraph{Conventional Neural Networks}

Construct a complex-valued neural network $\textrm{NN}_h: \mathbb{C} \rightarrow \mathbb{C}$ by taking a multi-layer perceptron with complex-valued weights in linear layers, and holomorphic activation functions, for example $\sigma(z) = \sin(z)$ or $\sigma(z) = \exp(z)$.

Using such a network in equation \eqref{eq:harmonic_simply_connected} is then guaranteed to be harmonic by Theorem~\ref{thm:simply_connected_holomorphic}.

There has been much research into complex-valued conventional NNs, but we consider a meaningful review of them to be outside the scope of our work, referring to~\cite{bassey2021} for an introduction to their theory and application.

\paragraph{Quantum Neural Networks}
We define a quantum variational state $|\Psi_{\theta,x,y}\rangle$, parametrised by $\theta$-dependent unitaries $\hat{\mathcal{U}}_{\theta_k}$ as well as a quantum feature map (QFM) - e.g. of Chebyshev tower type \cite{kyriienko2021dqc} - used to port the input variables to the Hilbert space acted upon by $\hat{\mathcal{U}}_{\theta_k}$: 
\begin{equation}
|\Psi_{\theta,x,y}\rangle = \hat{\mathcal{U}}_{\theta_2}\textrm{QFM}(x,y)\hat{\mathcal{U}}_{\theta_1}|\emptyset\rangle; 
\end{equation} 
where $|\emptyset\rangle$ represents a reference state. 
A QNN is typically modelled as a measurement of a cost Hamiltonian $\hat{\mathcal{H}}$:
\begin{equation}
\phi_{Q}(x,y) = \langle \Psi_{\theta,x,y}|\hat{\mathcal{H}}|\Psi_{\theta,x,y}\rangle. 
\label{eq:simply_connected_qnn} 
\end{equation}
Here, instead, we modify a `quantum kernel' approach \cite{schuld2019quantum, Schuld2021} to introduce a harmonic architecture. 
\begin{theorem}
Adopting an imaginary feature map of the form $\textrm{IQFM}(x,y)=e^{-(x + i y)\hat{\mathcal{H}}}$, the output obtained from the circuit as: 
\begin{equation}
\phi_{QH}(x,y) = \Re\{\langle \emptyset|\Psi_{\theta,x,y}\rangle\} 
\label{eq:harmonic_simply_connected_qnn}
\end{equation}
is harmonic over the inputs $x, y$. 
\label{th:Qharmonicity}
\end{theorem}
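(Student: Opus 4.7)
The plan is to reduce Theorem~\ref{th:Qharmonicity} to Theorem~\ref{thm:simply_connected_holomorphic} by exhibiting the inner product $\langle \emptyset | \Psi_{\theta,x,y}\rangle$ as the evaluation of a holomorphic function of the complex variable $z = x + iy$, so that $\phi_{QH}$ is automatically the real part of a holomorphic function and therefore harmonic.

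First, I would introduce the auxiliary map
\begin{equation}
f(z) \;=\; \langle \emptyset|\, \hat{\mathcal{U}}_{\theta_2}\, e^{-z\hat{\mathcal{H}}}\, \hat{\mathcal{U}}_{\theta_1}\,|\emptyset\rangle,
\end{equation}
which is defined on all of $\mathbb{C}$ and satisfies $f(x+iy) = \langle \emptyset | \Psi_{\theta,x,y}\rangle$ by construction, since the IQFM depends on $(x,y)$ only through the combination $x + iy$. The crucial step is then to argue that $f$ is entire in $z$: the operator-valued exponential $e^{-z\hat{\mathcal{H}}}$ is defined by its norm-convergent power series $\sum_{n\ge 0} \frac{(-z)^n}{n!}\hat{\mathcal{H}}^n$, and for the finite-dimensional Hilbert spaces associated with QNNs this is simply a matrix-valued entire function. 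Sandwiching with the fixed unitaries $\hat{\mathcal{U}}_{\theta_1}, \hat{\mathcal{U}}_{\theta_2}$ and the fixed reference state $|\emptyset\rangle$ produces a scalar power series in $z$ with infinite radius of convergence, so $f$ is holomorphic on $\mathbb{C}$.

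Having established holomorphicity, I would then identify $f$ with a choice of $\textrm{NN}_h$ in the architecture of equation \eqref{eq:harmonic_simply_connected}: setting $\textrm{NN}_h = f$ and $c(x,y) = x + iy$ gives $\phi_{QH}(x,y) = \Re \circ f \circ c(x,y)$. The conclusion then follows immediately from Theorem~\ref{thm:simply_connected_holomorphic}, which states that such architectures are harmonic whenever $\textrm{NN}_h$ is holomorphic. Alternatively, one can just invoke the Cauchy-Riemann calculation of that theorem directly on $f(x+iy) = u(x,y) + i v(x,y)$ to obtain $\nabla^2 u = 0$.

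The only subtle point — and the step I would be most careful about — is the justification that differentiation in $z$ of the operator exponential commutes with taking matrix elements, i.e.\ that $f$ inherits complex differentiability from the exponential. This is routine in finite dimensions via the power-series expansion (each term $\langle \emptyset|\hat{\mathcal{U}}_{\theta_2}\hat{\mathcal{H}}^n\hat{\mathcal{U}}_{\theta_1}|\emptyset\rangle z^n/n!$ is a monomial and the series converges uniformly on compacts), so there is no real obstacle; it is essentially a matter of stating it cleanly.
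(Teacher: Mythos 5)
Your proof is correct, and it reaches the same essential conclusion as the paper — namely that $\langle\emptyset|\Psi_{\theta,x,y}\rangle$ is a holomorphic function of $z=x+iy$, so that $\phi_{QH}$ is harmonic by the Cauchy--Riemann argument of Theorem~\ref{thm:simply_connected_holomorphic} — but you justify the holomorphy step differently. The paper (Appendix~\ref{app:qholomorphic}) diagonalises $\hat{\mathcal{H}}$ and expands the evolved state in its eigenbasis, arriving at the explicit form \eqref{eq:harmonic_simply_connected_qnn_result}, i.e.\ a $2^N$-term linear combination of the elementary holomorphic functions $e^{-zE_m}$ with circuit-dependent coefficients; you instead observe that $e^{-z\hat{\mathcal{H}}}$ is an entire matrix-valued function of $z$ via its norm-convergent power series, so the fixed matrix element $f(z)=\langle\emptyset|\hat{\mathcal{U}}_{\theta_2}e^{-z\hat{\mathcal{H}}}\hat{\mathcal{U}}_{\theta_1}|\emptyset\rangle$ is entire, and then plug $f$ in as $\textrm{NN}_h$ in \eqref{eq:harmonic_simply_connected} — exactly the identification the paper makes after its proof. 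Your route is shorter and needs no spectral theorem or QITE bookkeeping; the paper's route buys the explicit basis-function decomposition, which it reuses later to argue that the model's expressivity scales with the number of distinct eigenvalues (or gaps) of the feature-map Hamiltonian. Your worry about interchanging differentiation with matrix elements is indeed harmless in finite dimensions, as you say.
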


\begin{remark}
 For $x=0$, IQFM$(x,y)$ represents a unitary evolution generated by $\hat{\mathcal{H}}$, similar to the Hamiltonian evolution feature maps in \cite{kyriienko2021dqc, kyriienko2021}. 
For non-zero $x$, the operation is generally non-unitary, requiring Quantum Imaginary Time Evolution (QITE) to be implemented with quantum gates in hardware \cite{McArdle2019,Yuan2019, Motta2020}.
\end{remark}
\begin{proof}
The bulk of the proof is reported for brevity in Appendix \ref{app:qholomorphic}, where we show how extending QITE formalism, we can symbolically represent: 
\begin{equation}\label{eq:harmonic_simply_connected_qnn_result}
\phi_{QH}(x,y) = \Re\{\sum_m c_m(\theta_1)  \langle \emptyset|\hat{\mathcal{U}}_{\theta_2} |\psi_m\rangle e^{-(x + i y)E_m}\}.
\end{equation}
where $E_m$ is the $m$-th (scalar) eigenvalue of Hamiltonian $\hat{\mathcal{H}}$ and the state $|\Psi\rangle$ was decomposed into the basis set $|\psi_m\rangle$ with coefficients $c_m(\theta_1)$, summing over all $2^N$ eigenvalues $m$ for an N-qubit Hamiltonian.
Proven Eq. \ref{eq:harmonic_simply_connected_qnn_result}, we can then observe how $\phi_{{QH}}$ represents the real part of a sum over an exponential number of holomorphic functions (with coefficients tuned by circuit parameters $\theta_1$ and $\theta_2$), and is therefore in total harmonic.
\end{proof}

We can thus generalise to harmonic quantum neural networks the construct in equation \eqref{eq:harmonic_simply_connected}, where $\textrm{NN}_h(x + iy) = \langle \emptyset|\Psi_{\theta,x,y}\rangle$.

\subsubsection{Exact Harmonicity: Multiholomorphic Networks on Multiply-Connected Domains}\label{sec:multiholomorphic}

For many problems of interest, harmonic functions on multiply-connected domains need to be constructed. As demonstrated in proposition~\ref{thm:not_universal_simply_connected}, we require architectures beyond those demonstrated in equation \eqref{eq:harmonic_simply_connected} in these scenarios. We develop such architectures in this section.

We consider a multiply-connected domain in $\mathbb{R}^2$ which we denote by $\Omega$ and disjointly decompose it such that $\Omega = (\dot{\bigcup}_i \omega_i) \dot{\cup} (\dot{\bigcup}_{i, j} \partial\omega_{i, j})$ with $i, j \in \mathbb{N}$, each $\omega_i$ being simply-connected and each $\partial\omega_{ij}$ a region of (Lesbegue) measure zero between the subdomains $\omega_i$ and $\omega_j$. 

For each $\omega_i$, we associate a (quantum) harmonic neural network, $\phi_{H}$, of the form outlined in Eq. \ref{eq:harmonic_simply_connected}.
We can then construct a representation for a harmonic neural network $\phi_{\textrm{MH}}: \Omega \rightarrow \mathbb{R}$ as follows:
\begin{equation}\label{eq:harmonic_multiply_connected}
\phi_{\textrm{MH}}(\bm{x}) = \begin{cases} 
      \phi_H^{(i)}(\bm{x}) & \text{$\bm{x} \in \omega_i$}\\
      \frac{\phi_H^{(i)}(\bm{x}) + \phi_H^{(j)}(\bm{x})}{2} & \text{$\bm{x} \in \partial\omega_{ij}$}.
\end{cases}
\end{equation}

\begin{proposition}
Architectures of the form in \eqref{eq:harmonic_multiply_connected} are exactly harmonic almost everywhere in $\Omega$. Furthermore, they uniformly converge to any harmonic function almost anywhere if each $\phi_H^{(i)}$ converges uniformly to any harmonic function on each $\omega_i$.
\end{proposition}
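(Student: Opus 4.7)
The plan is to split the proposition into its two assertions and handle each separately, exploiting the fact that the boundary pieces $\partial\omega_{ij}$ collectively form a null set by hypothesis. For the first assertion (exact harmonicity almost everywhere), I would argue that on each open simply-connected piece $\omega_i$, the definition gives $\phi_{\textrm{MH}} \equiv \phi_H^{(i)}$, which is harmonic on $\omega_i$ by Theorem~\ref{thm:simply_connected_holomorphic} (or its quantum counterpart Theorem~\ref{th:Qharmonicity}). Hence $\phi_{\textrm{MH}}$ is exactly harmonic on $\bigcup_i \omega_i$. The complement of this set in $\Omega$ is $\bigcup_{i,j} \partial\omega_{ij}$, which is a countable union of measure-zero sets and therefore itself of measure zero, establishing harmonicity almost everywhere.

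For the uniform convergence claim, I would fix a target harmonic function $\phi:\Omega\to\mathbb{R}$ and an $\varepsilon>0$. Since $\phi$ restricted to any $\omega_i$ is harmonic there, the hypothesis provides, for each $i$, a choice of $\phi_H^{(i)}$ with $\sup_{\bm{x}\in\omega_i}|\phi_H^{(i)}(\bm{x}) - \phi(\bm{x})| < \varepsilon$. On $\bigcup_i \omega_i$ we then have $|\phi_{\textrm{MH}}(\bm{x}) - \phi(\bm{x})| = |\phi_H^{(i(\bm{x}))}(\bm{x}) - \phi(\bm{x})| < \varepsilon$, so uniform convergence holds off the null set $\bigcup_{i,j}\partial\omega_{ij}$.

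The one subtle point worth flagging, and the closest thing to a real obstacle, is the uniformity across the (possibly countable) collection $\{\omega_i\}$: the statement requires a single $\varepsilon$-bound valid simultaneously on every $\omega_i$, whereas the hypothesis as phrased supplies approximation on each $\omega_i$ individually. For a finite decomposition this is immediate by taking the max over a finite index set; for a genuinely countable decomposition one should either invoke that each $\phi_H^{(i)}$ can be chosen with an $\varepsilon$-tolerance independent of $i$ (which is the natural reading of the assumption), or strengthen the hypothesis so that the approximation rate is uniform in $i$. I would state this explicitly in the proof and note that in all practical applications the decomposition is finite, so the question is moot.

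Finally, I would remark that no claim is made about continuity of $\phi_{\textrm{MH}}$ across the interfaces $\partial\omega_{ij}$: the averaging in equation~\eqref{eq:harmonic_multiply_connected} is merely a tidy convention on a measure-zero set and does not affect either assertion. This also clarifies why the proposition is stated in the almost-everywhere sense rather than pointwise everywhere.
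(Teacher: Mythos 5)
Your proof is correct and follows essentially the same route as the paper's (which simply observes that harmonicity and uniform approximation can only fail on the measure-zero interface set $\bigcup_{i,j}\partial\omega_{ij}$). Your extra observation about needing a uniform $\varepsilon$-tolerance across a countable family of subdomains is a legitimate point of care that the paper's terse argument glosses over, but it does not change the substance of the proof.
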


\begin{proof}
The only regions where $\phi_{MH}(x)$ might not be harmonic are in $\partial\omega_{ij}$, which have Lebesgue measure zero, hence allowing for harmonicity almost everywhere. Similarly, the universality of each $\phi_H^{(i)}$ on each $\omega_i$ means the universality of $\phi_{MH}$ can only be violated on each $\partial\omega_{ij}$, allowing for uniform convergence almost everywhere.
\end{proof}

While $\phi_{\textrm{MH}}$ is harmonic by construction on each $\omega_i$, it remains to make it harmonic across each $\partial\omega_{ij}$.
We achieve this by including an extra loss term $L_{\partial\omega}$ alongside any other loss that $\phi_{\textrm{MH}}$ is being optimised upon: \begin{equation}
\begin{split}
\label{eq:multiply_connected_loss}
    L_{\partial\omega} = & \sum_{i,j} \mathbb{E}_{\bm{x}\sim\partial\omega_{ij}}\left[\left(\phi_H^{(i)}(\bm{x}) - \phi_H^{(j)}(\bf x)\right)^2 + \right. \\
    & \left. \left\| \nabla \phi_H^{(i)}(\bm x) - \nabla \phi_H^{(j)}(\bm x) \right\|_2^2 \right],
\end{split}
\end{equation}

where the first term in the sum incentivises continuity in the harmonic function, and the second term represents a squared $L_2$ norm which incentivises continuity in the conservative field of the harmonic function.
Such stitching together of domains, with loss functions ensuring continuity of relevant dynamics, is closely related to some domain decomposition techniques in PINNs~\cite{jagtap2020}.

The fundamental result of such a decomposition is that $\phi_{\textrm{MH}}$ can be harmonic almost surely, but it need not correspond to any holomorphic function on each $\partial\omega_{ij}$.
Since this construction is only non-harmonic on boundaries, we consider it to be harmonic almost everywhere.
We consider the practical performance of this in Sect.\ref{sec:application_heat}.

\subsection{Approximate Harmonicity: Curl-Driven Harmonic Networks}\label{theory:curl}

While the previous sections demonstrate exact harmonicity in two dimensions, they are not applicable in higher dimensions. In this section, we present techniques applicable to arbitrary dimensions. Note that Laplace's equation can be written as $\nabla \cdot (\nabla \phi) = 0$. 
This demonstrates that the gradient of any harmonic function must be divergence-free. 
Furthermore, in three dimensions, a general divergence-free field can be obtained by the identity $\nabla \cdot (\nabla \times A) = 0$.
This motivates the construction of an inductive bias comprising of two networks: $\phi_C(\bm{x}; \bm\theta_\phi)$ representing a harmonic potential field and $A(\bm{x}, \bm\theta_A)$ representing a neural network whose curl will be taken to provide a representation of the underlying conservative field of $\phi_C$, where $\bm\theta_\phi$ and $\bm\theta_A$ represent trainable parameters.
Including the following loss term during network optimisation thus provides an inductive bias towards harmonic functions:
\begin{equation}\label{eq:curl_net_loss}
    L_c(\bm\theta_\phi, \bm\theta_A) = \mathbb{E}\left[\left\| \nabla\phi_C  - \nabla \times A \right\|_2^2 \right]
\end{equation}
where the expectation is taken with respect to a probability distribution whose support covers the domain over which the inductive bias is desired. During optimisation, both $\phi_C$ and $A$ can be trained to minimise \eqref{eq:curl_net_loss}, there is no minimax game as in generative adversarial networks that might destabilise training in such a procedure.

Even though the curl operator is only well-defined in three dimensions, we note that we can use the exterior calculus of differential forms to derive similarly divergence-free operators in arbitrary dimensions. 

\begin{proposition}
Given an $(N-2)$ form in $\mathbb{R}^N$ whose components in a Cartesian basis are given by a neural network $A: \mathbb{R}^d\rightarrow \mathbb{R}^N$, $d=\binom{N}{N-2}$, the exterior derivative $dA$ represents a divergence-free neural network.
\end{proposition}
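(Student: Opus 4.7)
The plan is to reduce the claim to the basic identity $d \circ d = 0$ for the exterior derivative, combined with the standard Hodge identification between $(N-1)$-forms on $\mathbb{R}^N$ and vector fields, under which $d$ acting on an $(N-1)$-form corresponds to the divergence of the associated field.

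First I would fix conventions. Any $(N-1)$-form admits a unique Cartesian expansion
\[
\omega = \sum_{i=1}^N (-1)^{i-1}\, F_i\, dx_1 \wedge \cdots \wedge \widehat{dx_i} \wedge \cdots \wedge dx_N,
\]
where the hat denotes omission of the indicated factor, exhibiting the associated vector field $F=(F_1,\ldots,F_N)$. A short direct computation then verifies
\[
d\omega = (\nabla \cdot F)\, dx_1 \wedge \cdots \wedge dx_N:
\]
in each summand only the $\partial/\partial x_i$ derivative contributes (all other derivatives create a repeated wedge factor), and the prefactor $(-1)^{i-1}$ exactly cancels the sign generated by permuting $dx_i$ back into the first slot, so the surviving terms collapse to the divergence times the canonical volume form.

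Next, since $A$ is an $(N-2)$-form, $dA$ is an $(N-1)$-form; apply the identification above to extract the vector field $F$ associated with $dA$. Then $d(dA) = (\nabla \cdot F)\, dx_1 \wedge \cdots \wedge dx_N$. But $d^2 = 0$ on any smooth form, an identity that is ultimately the commutativity of mixed partial derivatives, which applies because a neural network with smooth activations is $C^\infty$. Hence $\nabla \cdot F = 0$ identically on $\mathbb{R}^N$, which is the claim.

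The only point requiring genuine care is the bookkeeping of Hodge sign conventions so that the vector field $F$ produced by the network via $dA$ matches the one intended in Eq.~\eqref{eq:curl_net_loss}; I expect this to be the single nontrivial step, and not a substantive obstacle. As a sanity check, for $N=3$ the construction recovers the familiar $F = \nabla \times A$ already exploited in the main text, and for $N=2$ it recovers the divergence-free rotated gradient of a scalar potential, both of which are trivially curl/divergence-free by equality of mixed partials.
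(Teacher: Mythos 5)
Your proposal is correct and follows essentially the same route as the paper's proof: identify $(N-1)$-forms with vector fields so that the exterior derivative corresponds to the divergence, then invoke $d\circ d=0$ (equality of mixed partials for a smooth network). The paper states these two ingredients tersely and relegates the explicit sign bookkeeping to worked 2D and 4D examples in its appendix, whereas you carry out the general sign computation inline; the content is the same.
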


\begin{proof}
The exterior derivative of an $(N-2)$ form $A$, yields an exact $(N-1)$ form, $dA$. Since there is a correspondence between the divergence of a field and the exterior derivative of an $(N-1)$ form, and since $d(d\phi) = 0$ for any differential form $\phi$, the result follows.
\end{proof}

While a full discussion of differential forms is outside the scope of this work, we provide example calculations in Appendix \ref{app:divfree} to facilitate incorporating such architectures in wider work.
We also note that similar arguments have been recently presented in \cite{richter2022neural}.

In two dimensions, this leads (with some abuse of notation in reusing the curl symbol to denote exterior derivatives) to:
\begin{equation}\label{eq:curl2d}
\nabla \times A = \left( \pd{y_1}{x_2}, -\pd{y_1}{x_1} \right),
\end{equation}
with $A: \mathbb{R}^2 \rightarrow \mathbb{R}$ defined by $A(x_1, x_2) = y_1$. A sample four-dimensional calculation is demonstrated in Appendix \ref{app:divfree_4d}.

We note that such operators are amenable to implementation on quantum circuits. 
Two independent quantum circuits representing $\phi_C$ and $A$ appearing in Eq,~\ref{eq:curl_net_loss} (as well as determining additional loss contributions detailed in Sect.~\ref{sec:methods}) can be estimated via simple cost functions (e.g. the \textit{total magnetization}).  
Composing such contributions into the relevant loss term(s) can be attained with additional classical automatic differentiation routines, e.g. \cite{bergholm2018pennylane}.
The latter ones can also handle circuit derivatives (here necessary up to the 2\textsubscript{nd} order), which can be alternatively attained with analytical methods on quantum hardware, such as the parameter-shift rule~\cite{guerreschi2017, mari2021estimating}.

\section{Applications}

We exemplify our approaches on sample applications geared towards electrostatics (Sect.\ref{sec:application_electrostatic}), heat distribution (Sects.\ref{sec:application_heat2}\&\ref{sec:application_heat}), and robot navigation (Sect.\ref{sec:application_robot}) and a further 3D fluid flow example in  Sect.\ref{app:results_3Dpipe}.
In order to provide robust benchmarks, we benchmark several forward solutions against finite element methods (FEM). Since the variational form of the FEM solution is linear, we assume that it converges reliably to provide a strong baseline.
In addition to the methods introduced in this work, we also include comparisons to PINNs~\cite{raissi2019}, as well as PINNs with Dirichlet-condition constrained architectures~\cite{lu2021}. 
We demonstrate a proof of concept of a quantum holomorphic network in Sect.\ref{sec:application_heat2}.
A multiply-connected domain is explicitly constructed in Sect.\ref{sec:application_heat} to demonstrate the Multiholomorphic networks of Sect.\ref{sec:multiholomorphic}. 
Multiholomorphic networks are not tested in other settings since their spatial domains do not require a simply-connected decomposition.
No holomorphic networks are tested in Sect.\ref{app:results_3Dpipe} since holomorphic networks do not apply to the 3D domain of that benchmark.

While we present results for both classical and quantum circuits, we emphasise that we do not wish to compare classical and quantum neural networks. The comparison is fraught with difficulties. For example, there is no clear way to compare a number of qubits with multilayer perceptron size.

Full finite elements and classical NN code  can be found in the Supplementary Material \cite{SI}.
We continue by outlining some loss functions we use with different methods, as well as summarising the implementation of each method.

\subsection{Methods}
\label{sec:methods}

\subsubsection{Loss Functions}

\paragraph{Dirichlet Losses} Imposing Dirichlet boundary conditions can be done variationally, using mean squared error losses.
Given a Dirichlet boundary $\Gamma$, where the target function is known to take constant value $c \in \mathbb{R}$, we train an arbitrary (quantum or conventional neural network) $\phi(\bm x, \bm \theta)$ to obey such a condition by including a term:
\begin{equation}\label{eq:dirichlet_loss}
    L_d = \mathbb{E}_{\bm x\sim\Gamma} \left[\left( \phi(\bm x, \bm \theta) - c \right)^2 \right]
\end{equation}

\paragraph{Physics-Informed Harmonic Losses} 
Given a spatial-domain $\Omega$ and a (quantum) neural network $\phi(\bm x, \bm{\theta})$, we follow established methodologies~\cite{raissi2019} to define a physics-informed loss towards harmonic functions by zeroing the Laplacian
\begin{equation}\label{eq:laplacian_loss}
    L_{\textrm{PI}_H} = E_{\bm x\sim\Omega} \left[\left(\nabla^2 \phi(\bm x, \bm \theta)\right)^2 \right]
\end{equation}

\paragraph{Dielectric Interface Loss}
To model the behaviour of a potential field across a dielectric interface, we must impose Maxwell's interface conditions, namely that potential fields (i.e. harmonic functions) are continuous across the interface, and that the normal components of underlying electric fields (i.e. the gradient of the harmonic function) are scaled according to material permittivities~\cite{jackson1999}.
Consequently, given two domains $\Omega_1$, $\Omega_2$ with permittivities of $\epsilon_1$ and $\epsilon_2$ and (quantum) neural networks $\phi_1(\bm x;\bm \theta_1)$ and $\phi_2(\bm x; \bm \theta_2)$ defined on each domain respectively, we construct a normal unit vector $\bm n(\bm x)$ to the boundary $\partial\Omega$ between $\Omega_1$ and $\Omega_2$ and define the following loss term:

\begin{equation}\label{eq:dielectric_loss}
\begin{split}
   L_{\textrm{PI}_{\textrm{DE} }}(\bm\theta_1, \bm\theta_2) = &\quad E_{\bm x\sim\partial\Omega} \left[\left(\phi_1(\bm x;\bm \theta_1) - \phi_2(\bm x;\bm \theta_2)\right)^2 \right. \\ 
   &\quad + \left. \left(\epsilon_1 \pd{E_1}{\bm n} -  \epsilon_2 \pd{E_2}{\bm n} \right)^2 \right]
\end{split}
\end{equation}
where $E_i$ represents the electric field in domain $\Omega_i$. In general contexts, we have $E_i = \nabla \phi_i (\bm x, \bm \theta)$. However, in the case of our curl-based approach, we represent $E_i$ by the underlying divergent free field $\nabla \times A$ as outlined in Sect. \ref{theory:curl}.

\subsubsection{Neural Networks}

\paragraph{(q)PINN (baseline)} We define a single (quantum or conventional) neural network, $\phi_{\textrm{PINN}}(\bm x; \bm \theta)$ and minimise the loss
\begin{equation}\label{eq:pinn_loss}
L_{\textrm{PINN}} = L_{\textrm{PI}_H} + L_d
\end{equation}

with $L_{\textrm{PI}_H}$ and $L_d$ defined in Eqs. \ref{eq:dirichlet_loss} and \ref{eq:laplacian_loss} respectively.

\paragraph{h(q)PINN (baseline)}
Given a multilayer perceptron $\textrm{MLP}(\bm x; \bm \theta)$ and a Dirichlet boundary $\Gamma$ taking constant value $c \in \mathbb{R}$ on it, we can construct a neural network that exactly satisfies the boundary condition while leaving flexibility to optimise towards harmonic functions~\cite{lu2021}. First, we define a distance function $d(\bm x)$ representing the shortest distance between a generic point $x\in \Omega$ and $\Gamma$, and then define a neural network of the form:
\begin{equation}\label{eq:hpinn_def}
    \phi_{\textrm{hPINN}} = c e^{-k d(\bm x)} + (1 - e^{-k d(\bm x)}) \textrm{MLP}(\bm x; \bm \theta),
\end{equation}
where $k \in \mathbb{R}$ is a hyperparameter to be chosen. Since this network satisfies Dirichlet condition automatically, it can be trained by optimising on Eq. \ref{eq:laplacian_loss} directly.

\paragraph{(q)Holomorphic (ours)}
We construct a harmonic neural network as defined in Eq. \ref{eq:harmonic_simply_connected} and a harmonic quantum neural network as defined in Eq. \ref{eq:harmonic_simply_connected_qnn}. Since these networks are harmonic by construction, our task of training them is reduced to a supervised learning problem on the boundary terms/data. Consequently, we train them by minimising Eq. \ref{eq:dirichlet_loss} alone.

\paragraph{Curl(q)Net (ours)}
We define two neural networks $\phi_c(\bm x, \bm \theta_1)$ and $A(\bm y, \bm \theta_2)$ as outlined in Sect.\ref{theory:curl}. 
Note that  $\nabla \phi_c$ and $\nabla \times A$ must operate in the same space, however in general the two (q)NNs can be defined on different spaces.

The Curl(q)Net solution is achieved by minimising the following objective function:
\begin{equation}\label{eq:curlnet_loss}
    L_{\phi_c} = L_d +  L_c
\end{equation}
with $L_d$ and $L_c$ defined in Eqs. \ref{eq:dirichlet_loss} and \ref{eq:curl_net_loss} respectively. 
Indeed, satisfying $L_c$ amounts to satisfy $L_{\textrm{PI}_H}$. 
We substitute Eq. \ref{eq:curl2d} into \ref{eq:curl_net_loss} in our scenario since our applications are in two dimensions. 
We further note that the term $L_d$ can be dropped from the optimisation procedure if $\phi_c$ is defined in a manner analogously to Eq. \ref{eq:hpinn_def}.

\paragraph{Multiholomorphic (ours)}
In the case of multiply-connected domains, we decompose the domain into simply-connected components so that we can still leverage holomorphic function to derive harmonic neural networks (see also Fig. \ref{fig:triangular_heater}a). Thus, we take a multiholomorphic network $\phi_{\textrm{MH}}$ as defined in Eq. \eqref{eq:harmonic_multiply_connected} and minimise the following:
\begin{equation}\label{eq:multiholomorphic_loss}
   L_{{MH}} =  L_{\partial\omega} + L_d
\end{equation}
with $L_d$ and $L_{\partial\omega}$ defined in Eqs. \ref{eq:dirichlet_loss} and \ref{eq:multiply_connected_loss} respectively.

\paragraph{XPINN (baseline)}
Since our multiholomorphic function involves partitioning a domain, we include a domain decomposition strategy applied to PINNs as a further benchmark to try and maintain a fair perspective. The fundamental idea behind XPINNs is to decompose a domain into disjoint subdomains ~\cite{jagtap2020}, whilst variationally inducing the continuity of the solutions across each subdomain. We can define a neural network $\phi_{\textrm{XPINN}}(\bm x; \bm \theta)$ analogously to the multiholomorphic network in Eq. \ref{eq:harmonic_multiply_connected}, except we use real-valued multilayer perceptrons on each subdomain as opposed to harmonic neural networks.
Consequently we can train $\phi_{\textrm{XPINN}}(\bm x; \bm \theta)$ by minimising
\begin{equation}
\label{eq:multiholomorphic_loss2}
   L_{\textrm{XPINN}} =  L_{\partial\omega} + L_d + L_{\textrm{PI}_H},
\end{equation}
with $L_{\partial\omega}$,  $L_d$ and $L_{\textrm{PI}_H}$ defined in equations \ref{eq:multiply_connected_loss},  \ref{eq:dirichlet_loss} and \ref{eq:laplacian_loss} respectively, the latter contribution being necessary due to dropping of conditions imposing harmonicity.

\subsection{Experimental Setup}\label{sec:experimental_setup}
We conduct all (both quantum and conventional) experiments in Python3 and make use of NumPy~\cite{harris2020}, PyTorch~\cite{paszke2019} and Matplotlib~\cite{hunter2007} packages. FEM ground truths were constructed using FEniCS~\cite{alnaes2015}. All the QNNs used in this paper are implemented with proprietary code, leveraging upon the packages PyTorch and Yao.jl~\cite{luo2020yao}. 

We implement all classical neural networks in PyTorch~\cite{paszke2019}. We consistently use multilayer perceptrons with 3 hidden layers, width 32, initialised with Kaiming Uniform initialisers~\cite{he2015}, optimised for 16,000 epochs over full-batches with an Adam optimizer \cite{kingma2014}  with a learning rate (LR) of $10^{-3}$. We use $\tanh$ activations for real-valued NNs and $\sin$ activations for holomorphic NNs. 
All the QNNs used in this paper are implemented with proprietary code, leveraging upon the packages PyTorch and Yao.jl~\cite{luo2020yao}. 
Further details, including details on simulated quantum circuits and hardware specifics, are in Appendix \ref{app:exp_details}. Here, however,  we observe that a QNN defined over an $n$-qubit quantum circuit has an expressivity comparable to a spectral expansion employing $2^n$ terms. This is empirically observed e.g. in Sect.~\ref{sec:application_heat2}.
Such an exponential increase in the expressivity with the quantum circuit size holds promise for potential future advantage in expressing targeted solutions.

For each application, we construct boundary conditions comprising of lines where we sample 100 uniformly-spaced points. To minimise physics-informed losses, we sampled 1024 collocation points randomly (uniformly) on the interior of each domain. These points are sampled once and then kept constant for each experiment to allow each run the same amount of information.

\begin{figure}
  \begin{center}
    \includegraphics[width=0.3\textwidth]{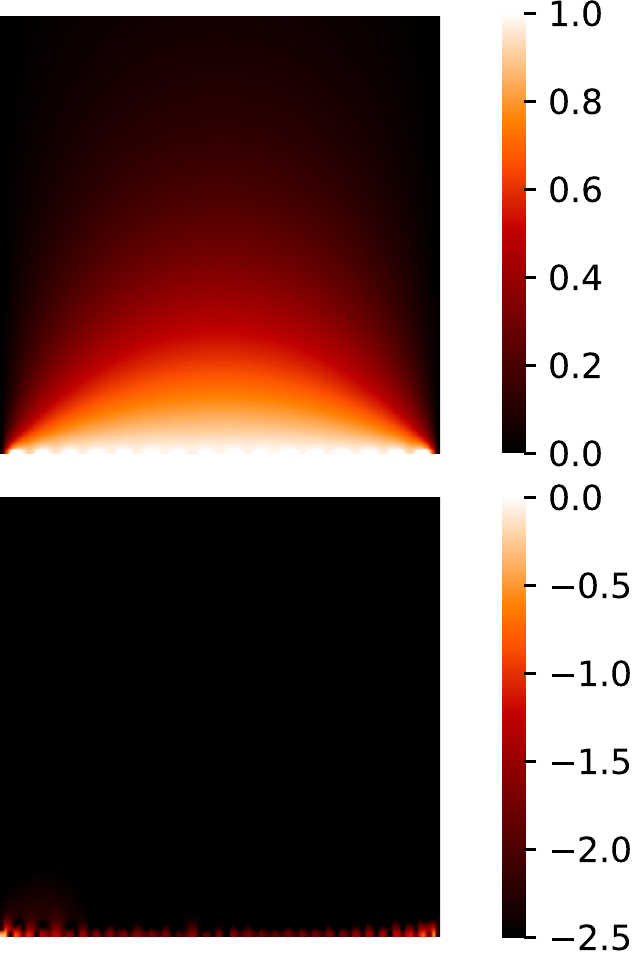}
  \end{center}
  \caption{(top) heat profile in arbitrary units for a refrigerated box with the bottom side heated, computed with a numerical simulation of a 4 qubits hQNN. (bottom) log\textsubscript{10}-scale error of hQNN vs analytical solution.}
  \label{fig:harmonic_qnn_result}
\end{figure}

\subsection{Results A: Dielectric Material in a Charged Box}\label{sec:application_electrostatic}

Consider a dielectric material placed in a 2D box grounded on 3 sides, and an electric potential applied to the bottom edge. 
We consider the task of inferring the underlying electric potential field within the box, in the presence of different materials.

We thus introduce a first dielectric domain $\Omega_1 = [0, 1] \times [0, 0.5]$, with permittivity $\epsilon_1 = 1.0$ and free-space above the dielectric as $\Omega_2 = [0, 1] \times [0.5, 1]$ with permittivity $\epsilon_2 = 0.01$. The bottom lid ($y=0$) has an applied voltage of $V=1.0$ (in arbitrary units), whereas $V=0.0$ on all other boundaries.
We define neural networks and losses on each $\Omega_1$ and $\Omega_2$ as outlined in Sect. \ref{sec:methods}. 
In addition, we couple the losses of both networks using Eq. \ref{eq:dielectric_loss} and optimize both networks jointly. 
We do not consider our multiholomorphic or the XPINN architecture in this scenario, since they were devised for use on multiply-connected domains, as opposed to the simply-connected setting in this application.

We report in Table \ref{tab:benchmarks} the results as (i) root mean squared errors (RMSE) against FEM solutions, i.e.
$\sum_{\bm{x}_i \in \tilde{\Omega}} \sqrt{
\left(\phi_{}(\bm{x}_i) - \phi_{\textrm{FEM}}(\bm{x}_i)\right)^2
}/|\tilde{\Omega}|$, where $\tilde{\Omega} \equiv \{\bm{x}_i\}$ a set of 
collocation points uniformly sampled in the interior domain $\Omega$, along with (ii) the expected absolute Laplacian over $\Omega$.

We note that PINNs achieve a low mean Laplacian over the domain, even though they do not represent the best-performing solutions. One possible explanation would be that the Laplacian is directly optimised for PINNs (equations \ref{eq:laplacian_loss} and \ref{eq:pinn_loss}), but minimising such a loss could still lead to systematic errors across a spatial domain.

\begin{table*}
  \caption{Performance of our methods on the electrostatic, heat-distribution and navigation benchmarks illustrated respectively in Sect. \ref{sec:application_electrostatic}, \ref{sec:application_heat} \& \ref{sec:application_robot}. Note that all numbers are multiplied by 100 for clarity.  We present the mean and standard deviation of each method over 10 repeated runs. In addition, we report the expected absolute Laplacian over the spatial domain, which should be identically zero for perfectly harmonic functions. We separate quantum (see also Fig. \ref{fig:triangular_heater}) and conventional techniques with a single horizontal line. Lower numbers indicate better performance. Each best-performing method is highlighted by bold text.
  }
  \label{tab:benchmarks}
  \centering
\resizebox{\textwidth}{!}{
\begin{tabular}{lcccccccc}
\toprule
& \multicolumn{2}{c}{Electrostatics}& \multicolumn{2}{c}{Heat Distribution} & \multicolumn{2}{c}{Robot Navigation} & \multicolumn{2}{c}{Fluid Flow} \\
& RMSE & $E\left[\left| \nabla^2 \phi \right|\right]$ & RMSE & $E\left[\left| \nabla^2 \phi \right|\right]$ & RMSE& $E\left[\left| \nabla^2 \phi \right|\right]$ &RMSE &$E\left[\left| \nabla^2 \phi \right|\right]$\\
\midrule
Holomorphic (ours) & \textbf{0.3 \ppm 0.09} & 0.0 \ppm 0.0 & 19.1 \ppm 0.05 & 0.0 \ppm 0.0 & 17.1 \ppm 0.5 & 0.0 \ppm 0.0 & - & -\\
CurlNet (ours) & 1.4 \ppm 0.06 & 165 \ppm 27.3 & 2.4 \ppm 0.004 & 0.04 \ppm 0.05 & \textbf{1.6 \ppm 1.1} & 2013 \ppm 1502 & \textbf{11.2 \ppm 0.7} & 162 \ppm 8.3\\
PINN & 6.7 \ppm 1.8 & 5.3 \ppm 0.6 & 2.4 \ppm 0.0006 & 0.002 \ppm 0.0003 & 24.2 \ppm 28.1 & 11.2 \ppm 1.8 & 18.1 \ppm 0.6 & 10.9 \ppm 5.9\\
hPINN & 10.0 \ppm 0.03 & 11.4 \ppm 1.3 & 2.4 \ppm 0.006 & 0.03 \ppm 0.05 & - & - & 14.6 \ppm 0.08 &  91.4 \ppm 33.7\\
Multiholomorphic (ours) & - & - & \textbf{0.2 \ppm 0.09} & 0.0 \ppm 0.0 & - & - & - & -\\
XPINN & - & - & 2.4 \ppm 0.02 & 0.1 \ppm 0.3 & - & - & - & -\\
\midrule
CurlqNet (ours) &  \textbf{3.4 \ppm 0.03} & 269 \ppm 15.4 & \textbf{2.1 \ppm 0.02} & 2.8 \ppm 0.4 & \textbf{6.6 \ppm 0.09} & 307 \ppm 9.4 & - & - \\
qPINN &  13.8 \ppm 0.3 & 8.4 \ppm 0.7 & 17.2 \ppm 0.3 & 28.2 \ppm 2.4 & 33.0 \ppm 0.4 & 14.3 \ppm 0.6 & - & -\\
hqPINN   &  16.2 \ppm 1.8 & 10.1 \ppm 19.7 & 6.5 \ppm 0.04 & 2.4 \ppm 0.5 & - & - & - & -\\
qHolomorphic (ours)  &  - & - & - & - & 16.5 \ppm 0.8 & 0.0 \ppm 0.0 & - & -\\
\bottomrule
\end{tabular}
}
\end{table*}

\subsection{Results B: Heat Distribution in a Box with Single-Sided Heating}\label{sec:application_heat2}
We consider a refrigerated square box with uniform heating on a single side, where we wish to find the steady-state heat distribution described by $\phi$. We consider a domain $\Omega = [0, 1] \times [0, 1]$. Desired Dirichlet boundary conditions are $\phi=1$ on the edge $x=0$, and $\phi=0$ (in arbitrary units) when $y=0$, $y=1$, or $x=1$. Note that this problem is a single (simply-connected) domain. We know the analytical solution as an infinite series, and plot the converged result when adopting a \textit{qHolomorphic} architecture, along with the absolute error with the analytical result, in Fig. \ref{fig:harmonic_qnn_result}.
We use the availability of an analytical result for this case to highlight the expressivity of QNN architectures in comparison with classical spectral solutions, as discussed in detail in Appendix~\ref{app:quantum_laplace}).

\subsection{Results C: Heat Distribution Around a Heater}\label{sec:application_heat}
We consider a heated triangular body in a refrigerated box, where we wish to find the steady-state heat distribution described by $\phi$.

We consider a domain $\Omega = [0, 10] \times [0, 10]$, with an equilateral triangle of length $4$ centred in the box as illustrated in Fig.  \ref{fig:triangular_heater}a.
Desired Dirichlet boundary conditions are $\phi=1$ on the triangle edges and $\phi=0$ (in arbitrary units) when $x=0$, $x=10$, $y=0$ or $y=10$. With these conditions, $\phi$ defines a harmonic function representing the steady-state temperature distribution inside the box.

\begin{figure}
\centering
\includegraphics[width=\columnwidth]{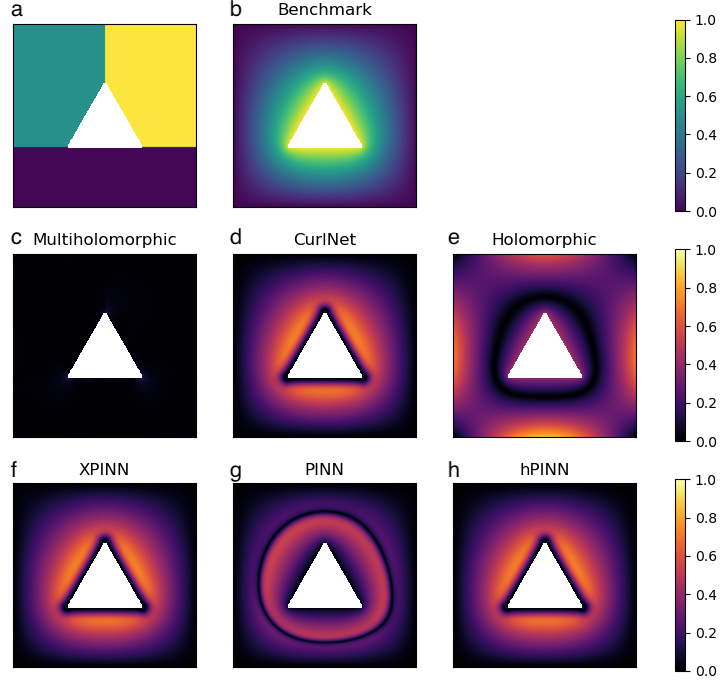}
\caption{
(a) A partitioning of the multiply-connected domain in Sect.\ref{sec:application_heat} into three simply-connected domains on which we define our multiholomorphic and XPINN networks. Note that the white triangle is not considered part of the domain and is excluded from the analysis.
(b) The solution $\phi(x,y)_{\textrm{FEM}}$ of the heater problem described in Sect.~\ref{sec:application_heat}, as provided by a FEM solver for benchmark.
(c-h) A comparison of the performance for different neural network solutions, by plotting the absolute error of each network compared against $\phi(x,y)_{\textrm{FEM}}$ given respectively by (c) a Multiholomorphic neural network, (d) a CurlNet architecture, (e) a Holomorphic network, (f) an XPINN domain decomposition based approach, (g) a regular PINN and (h) an hPINN architecture with Dirichlet boundary conditions included in the network architecture. 
Additional details and definitions are provided in the main text. 
All plots report a sample of the reference metric on a uniform $512\times 512$ grid. 
\textit{Results for the same test case adopting various quantum architectures are reported in App.~Fig.~\ref{fig:dqc_comparative_performances}}.
}
\label{fig:triangular_heater}
\end{figure}

Note that this problem is not on a simply-connected domain. 
For example, a closed path around the triangular body cannot be reduced to a point, without crossing the body. Consequently, we have reason to believe that the resulting solution will not be representable by our holomorphic formulation, though it could be representable by a multiholomorphic formulation. 
We partition our domain as outlined in Fig. \ref{fig:triangular_heater}a for both XPINN and multiholomorphic approaches, and report our results in Table \ref{tab:benchmarks}.
Note that as in section \ref{sec:application_electrostatic}, we note that a low expected absolute Laplacian does not necessarily correspond to a good solution in terms of RMSE.
All the figures of merit for this case are equivalent to those outlined in Sect. \ref{sec:application_electrostatic}.

\subsection{Results D: Robot Navigation in a Previously Explored Environment}\label{sec:application_robot}

We consider a robot navigating a known, static environment, e.g. a pre-explored corridor. The use of general potential to guide robot movement is well-established, and it can present evident advantages over traditional path-finding algorithms \cite{rimon1990exact}. For example, it offers a symbolic, parameterized representation of the environment that can be easily updated with new information, as well as exploiting special properties of the mechanical system to ensure more stringent adherence to problem constraints, such as e.g. a maximum available torque to the robot.   
In particular, it has been noted that harmonic functions have favourable properties for robot navigation \cite{connolly1993applications}: harmonic functions do not attain local minima anywhere (except for constant harmonic functions), which prevents navigation paths from becoming stuck in such minima. This property led to a few dedicated studies \cite{kazemi2004robotic, loizou2014multi}.

Define a domain $\Omega = ([0, 0.5] \times [0, 0.6]) \cup ([0.5, 1.0] \times [0.4, 1.0])$.
We construct a harmonic function, $\phi$ with Dirichlet boundary conditions  $\phi=-1$ at $y=0$ and $\phi=1$ at $y=1$. A summary of resulting paths and errors in the potential relative to finite element benchmarks can be found in Fig. \ref{fig:robot_analysis}.

\begin{figure}[ht!]
\includegraphics[width=\columnwidth]{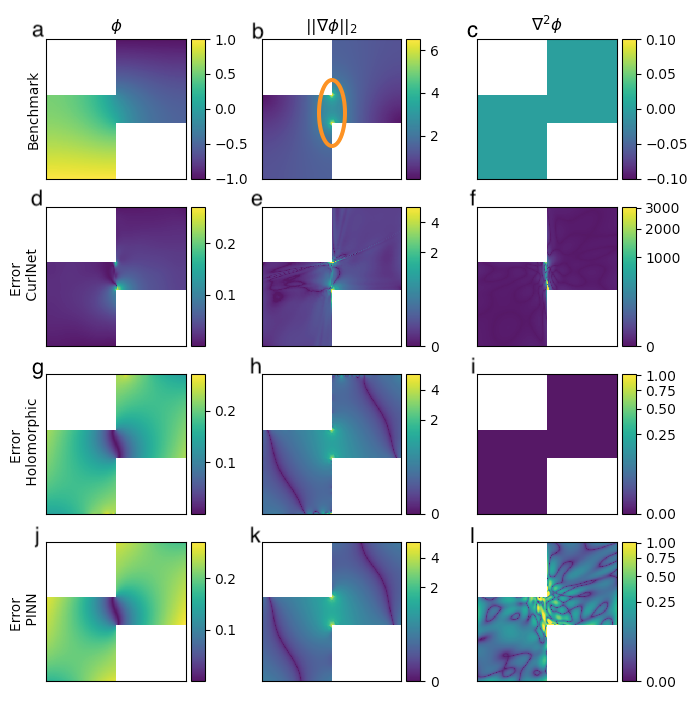}
\caption{
Benchmark and sample errors from the CurlNet, Holomorphic and PINN approaches in the robot navigation task. The orange oval in subplot b highlights an area of rapidly changing conservative field, which can be difficult for models to fit effectively. (a) Baseline potential (i.e. $\phi$) solution generated by finite elements. (b) Baseline magnitude of conservative field (i.e. $\|\nabla\phi\|_2$) calculation generated by finite elements (c) The exact reference Laplacian (i.e. identically zero). 
(d) The absolute error in the potential from a sample CurlNet run. (e) The absolute error in the magnitude of the conservative field in the CurlNet solution. (f) The absolute error in the Laplacian of the CurlNet solution. 
(g) The absolute error in the potential of the holomorphic solution. (h) The absolute error in the magnitude of the conservative field in the holomorphic solution. (i) The absolute error in Laplacian in the holomorphic solution. 
(j) The absolute error in the potential of the PINN solution. (k) The absolute error in the magnitude of the conservative field in the PINN solution. (l) The absolute error in Laplacian in the PINN solution. 
Please note the log colour scale of subplot f, showing very high Laplacians in a small central region.
\textit{Results for the same test case adopting various quantum architectures are reported in App.~Fig.~\ref{fig:classical_robot_results_fig}}.
}
\label{fig:robot_analysis}
\end{figure}

We note that even though the holomorphic network did not characterise the underlying potential field as accurately as the CurlNet, it provided the most consistent robot navigation paths.
This might be explained by the high errors in Laplacian measured in the CurlNet approach.

That CurlNet offers a more accurate potential despite having a higher Laplacian is noteworthy. In this particular spatial domain, we note that there are rapidly changing conservative fields (see Fig.\ref{fig:robot_analysis}b. The CurlNet approach is able to match this behaviour well (Fig.\ref{fig:robot_analysis}e), but at the expense of introducing order of magnitude increases in its Laplacian (Fig. \ref{fig:robot_analysis}f). 
However, holomorphic and PINN approaches directly constrain a for a zero Laplacian ($\frac{\partial^2\phi}{\partial x^2} + \frac{\partial^2\phi}{\partial y^2} = 0$). Evidently, optimising for a zero Laplacian but a large gradient is a difficult optimisation task. Consequently, in order to maintain a low Laplacian, we note that the conservative field in holomorphic and PINN approaches show high errors (Fig.\ref{fig:robot_analysis}h and k). In order to maintain low Laplacians in these regions

See Table \ref{tab:benchmarks} for a full comparison of mean squared errors in the potentials and Laplacian's over valid points (i.e. within the environment accessible to the robot) extracted from a uniform $64 \times 64$ grid, for both classical methods and their previously introduced quantum counterparts.

\subsection{Results E: Potential Flow Through a 3D Pipe}
\label{app:results_3Dpipe}

We consider fluid flowing through a square pipe and study potential flow, denoted by $\phi$.
We take the domain to be the unit cube: $\Omega = [0, 1]^3$ with a boundary that we denote by $\partial\Omega$. 
As boundary conditions, we impose $\phi(0.0, y, z) = 1.0$, $\phi(1.0, y, z) = -1.0$ and $\phi(x, y, z) = 0.0$ everywhere else on $(x,y,z) \in \partial\Omega$.

We benchmark the classical methodologies suitable for problems in dimensions higher than two on this system, where we find favourable performance relative to PINNs, as outlined in Table~\ref{tab:benchmarks}.

\section{Discussion}
In this work we demonstrated the inclusion of inductive biases in classical and quantum neural networks alike towards harmonic functions, observing how despite their crucial importance, there had been only few attempts at including specifically harmonic inductive biases. 
In summary, we constructed exactly harmonic functions in two dimensions in simply-connected domains, harmonic functions almost everywhere in two-dimensional multiply-connected domains, and approximately harmonic functions in arbitrary domains.
We demonstrated our approach with comparisons to (quantum) physics-informed neural networks \cite{raissi2019, kyriienko2021dqc} in a range of tasks, namely heat distribution, electrostatics and robot navigation.
Previous tools in this domain were mostly limited to finite-element modelling \cite{galybin2010}, which is less suited for problems such as inverse design and transfer learning.
Training upon conventional neural networks as well as simulated quantum circuits confirmed the benefits of our newly proposed architectures.
Interestingly, we note that achieving a low Laplacian in a given solution is not always consistent with achieving low mean squared errors against a trustworthy benchmark. 
It is perhaps unsurprising that PINNs typically have low differential equation residuals when trained since they optimise against PDE residuals directly.
However, it is interesting that this might not necessarily be the best strategy for achieving a low error in the resulting solution, as evidenced by the fact that our CurlNet architectures consistently achieve lower RMSEs than their PINN counterparts, whilst exhibiting higher errors in their Laplacians, which we explain further in Sect.\ref{sec:application_robot}.
Besides the merits in all those applications where harmonic functions are relevant, our work highlights how developments in conventional machine learning architectures can be sometimes readily ported to the corresponding realm of quantum machine learning, here exemplified by trainable, parameterised quantum circuit architectures. 
Potential future work might include extending holomorphic functions to higher dimensions. We note that there are connections between so-called regular quaternionic functions and harmonic functions that might facilitate such advancements \cite{sudbery1979quaternionic}.
We hope that our findings encourage further cross-pollination between the quantum and classical machine learning fields.


\section{Acknowledgements}
This work has benefitted from feedback and suggestions for improvement from anonymous ICML reviewers. We would like to thank them for their time and effort.

\bibliography{references}
\bibliographystyle{icml2023}

\newpage
\appendix
\onecolumn


\section*{Appendix}

\section{Details on Quantum Approaches}

\subsection{Further Details about (Inductive Biases for) Quantum Algorithms}
\label{app:qalgorithms}

Quantum computing is a computational paradigm that
holds theoretical promise for exponential speedups over conventional computers in, for example, factoring prime numbers~\cite{shor1994} and solving linear systems of equations~\cite{harrow2009}.
However, the practical application of such provably advantageous algorithms is hindered by the lack of fault-tolerant quantum computers, which are still thought to be years away in terms of development.
In spite of such theoretically-guaranteed computational advantages being limited to future devices, there have already been indications of experimental quantum advantage having been achieved for certain restricted classes of problems~\cite{arute2019, zhong2020}.
While such experimental advantages have been demonstrated, it remains to find advantages in applications which have wider practical application, leading to the development of the variational quantum algorithms~\cite{cerezo2021} and QNNs discussed in the main paper. In particular, QNNs have been used for classification and regression tasks~\cite{Mitarai2018}, solving differential equations~\cite{kyriienko2021dqc, Paine2021}, model discovery~\cite{Heim2021} and extremal learning~\cite{Varsamopoulos2022}.  

These quantum NNs (QNNs) hold the promise of being advantageous in terms of the \textit{expressivity}~\cite{schuld2019quantum,abbas2021}, offered by parameterised quantum circuits, whereas their \textit{trainability} is susceptible to issues~\cite{mcclean2018, anschuetz2022beyond}, e.g. large regions of parameter space exhibit exponentially-vanishing parameter gradients with respect to loss functions, so-called `barren plateaus'.   

One successful means to mitigate barren plateaus has been to include inductive biases within quantum circuit designs~\cite{hadfield2019, bharti2021}.
It has even been argued that, in some settings, including inductive biases in quantum circuits is a prerequisite for quantum advantage~\cite{kubler2021}.
Including inductive biases in conventional NNs has been a staple of progress in machine learning, e.g. ranging from convolutional neural networks~\cite{lecun1998} and geometrically-invariant networks~\cite{giles1987} through to recent examples such as Hamiltonian neural networks~\cite{greydanus2019} and physics-informed neural networks~\cite{raissi2019}.
However, the use of inductive biases in designing quantum circuits and quantum machine learning models is still a nascent field, though there is a growing research interest on this front~\cite{mernyei2021, larocca2022}. 
the  though there is a growing research interest on this front~\cite{mernyei2021, larocca2022}. 
In the context of variational quantum algorithms, inductive biases are often referred to as problem-inspired ans{\"a}tze.
A well-established example is the unitary coupled cluster ansatz for modelling the ground-state energy of molecular Hamiltonians~\cite{lee2018}. 
Hard constraints in ans{\"a}tze for combinatorial optimization have also been previously reported~\cite{hadfield2017}.
Analogously to how neural architecture search~\cite{elsken2019} might learn architectures with given inductive biases, there have been developments on adaptively constructing ans{\"a}tze for molecular simulations~\cite{grimsley2019}.
Another well-known paradigm concerning the adoption of inductive biases in the context of variational quantum algorithms refers to the characterisation of quantum systems. In this case, learning a full dynamic map describing a quantum process (i.e. performing ab initio a quantum tomography) is unfeasible already for systems of very few qubits~\cite{mohseni2008qpt}.
However, the system dynamics can be modelled constraining such maps by knowledge of (parametrised) generators of the dynamics (e.g. the system's Hamiltonian~\cite{Granade2012qhl}) and of their eigenstates~\cite{greiter2018method}. Similarly to aforementioned ans{\"a}tze in molecular simulations, such parametrised generators can also be made adaptive~\cite{gentile2021learning}.

\subsection{Holomorphic Quantum Circuits and Harmonic Quantum Neural Networks}
\label{app:qholomorphic}
Our aim is to construct quantum circuits which exhibit holomorphic properties.\\
In a general form, a QNN model can be written as 
\begin{equation}\label{most_general_qnn}
        \phi_{Q}(\bm x) = \langle \emptyset | \hat{\mathcal{U}}(\bm x, \bm \theta)^\dagger \hat{\mathcal{H}}(\bm x,\bm \theta) \hat{\mathcal{U}}(\bm x, \bm \theta) | \emptyset\rangle,
\end{equation}
where the unitary circuit ansatz $\hat{\mathcal{U}}(\bm x, \bm{\theta})$, and the Hermitian cost Hamiltonian $\hat{\mathcal{H}}(\bm x,\bm{\theta})$, are functions of function-input variables and model parameters $\bm{\theta}$, and $|\emptyset\rangle$ represents the 0-state, i.e. a reference state for the quantum computation. Alternatively, a different type of quantum model can be written as
\begin{equation}
    \phi_{QH}(\bm x) = \Re{\langle \emptyset | \hat{\mathcal{U}} (\bm x,\bm{\theta}) | \emptyset\rangle},
\end{equation}
which represents the real part of a complex number obtained from measuring the overlap of the parametrized state (i.e. the reference 0-state evolved by the unitary $\mathcal{\hat U}$, for example prepared on a register of qubits), with the 0-state itself. Such overlap can be computed using e.g. the Hadamard or SWAP tests \cite{nielsen2002quantum}.\\ 
To compute the value of $\phi_Q(\bm x)$ or $\phi_{QH}(\bm x)$, or even approximate it, is believed to be very computationally intensive on a classical computer for systems involving large unitaries $\mathcal{\hat U}$ (which in turn corresponds to the size of the target system). On the contrary, this is typically feasible on a quantum computer, provided some assumption on the circuit structure, and eventually the cost Hamiltonian for $\phi_Q(\bm x)$ \cite{arute2019}.\\
For simplicity and ease of analysis, we here restrict ourselves to a specific structure of the cost Hamiltonian as being independent of $\bm x$ and $\bm{\theta}$, i.e. $\hat{\mathcal{H}}(\bm x,\bm{\theta})=\hat{\mathcal{H}}$, and the circuit structure as 
\begin{equation}\label{simpler_unitary_qnn}
    \hat{\mathcal{U}}(\bm{x}, \bm \theta) = \hat{\mathcal{U}}_{\theta_2}\textrm{QFM}(\bm x)\hat{\mathcal{U}}_{\theta_1},
\end{equation}
is then a single quantum feature map $\textrm{QFM}(\bm x)$, squeezed between two general variational circuits $\hat{\mathcal{U}}_{\theta_j}$ \cite{kyriienko2021dqc} which only depend on model parameters $\bm{\theta}_j$.

To describe a particular type of QFM in the form of a Hamiltonian evolution, we first make use of the following decomposition of a Hamiltonian evolution applied to some arbitrary initial state $|\Psi_{\text{ini}}\rangle$:
\begin{equation}\label{eq:hamiltonian_evolution}
e^{- i t\hat{\mathcal{G}}}|\Psi_{\text{ini}}\rangle = \sum_m c_m e^{- i t E_m}|\psi_m\rangle.
\end{equation}
For the examples in our paper  $\bm x \equiv (x,y)$ - as we only deal with 2D problems - and for a QFM of the form in \eqref{eq:hamiltonian_evolution}, we can think that the dependency upon the two variables can be expressed via the time: 
\begin{equation}
    t = t(x,y).
    \label{eq:timefunction}
\end{equation}
Inserting this expression in \eqref{most_general_qnn}, where $|\Psi_{\text{ini}}\rangle \equiv |\emptyset\rangle$, using the assumption \eqref{simpler_unitary_qnn}, and reminding that the decomposition in \eqref{eq:hamiltonian_evolution} involves orthonormal $|\psi_m\rangle$, which are also eigenstates of $\mathcal{\hat G}$, i.e. $\mathcal{\hat G} |\psi_m\rangle = E_m |\psi_m\rangle$, we find:
\begin{equation}\label{eq:phiq}
\phi_{Q}(x,y) = \langle \Psi(\theta_1)| e^{i t \mathcal{\hat G}}
\hat{\tilde{\mathcal{H}}}(\theta_2) 
e^{- i t \mathcal{\hat G}} |\Psi(\theta_1)\rangle = \sum_{\langle m,n\rangle} c_{mn}(\theta_1, \theta_2) e^{- i t (E_m-E_n)}
\end{equation}
where $\hat{\tilde{\mathcal{H}}}= \hat{\mathcal{U}}_{\theta_2}^\dagger \mathcal{\hat H} \hat{\mathcal{U}}_{\theta_2}$ indices $\langle m,n\rangle$ sum over all $m$ and $n$ from $1$ to the size of the accessible Hilbert space (i.e. $2^N$ for an N-qubit register). 
While $c_{mn}$ are generally complex-valued, the resulting function output is guaranteed to be real-valued.\\ 
For a QNN of the form $\phi_{QH}(x,y)$ we find
\begin{equation}\label{eq:phiqh}
\phi_{QH}(x,y) = \Re\{\sum_m c_m(\theta_1)  \langle \emptyset|\hat{\mathcal{U}}_{\theta_2} |\psi_m\rangle e^{- i t E_m}\} = \Re\{\sum_m \tilde{c}_m(\theta_1, \theta_2) e^{-i t E_m}\}.
\end{equation}
If we now consider a time evolution in \eqref{eq:timefunction} of the form $t(x,y) = y - x i$, we obtain the result from the main text in Eq.~4. This yields a sum over exponentials of holomorphic functions, which is thus holomorphic itself and has a harmonic real part.\\
With this choice, we find the quantum models to be represented by linear combinations of $e^{-(x+iy) E}$ with $E$ some number representing an eigenvalue, or gap between two eigenvalues, of the QFM evolution Hamiltonian. The number of unique terms is therefore equal to the number of unique (non-degenerate) eigenvalues or gaps in the spectrum of the QFM evolution Hamiltonian. The more unique terms, the more basis functions and thus the more expressivity the quantum model can obtain. We describe here the specific complex-exponential QFM we chose for the results section
\begin{equation}\label{eq:qfm}
\text{QFM}(x,y) = e ^ {(x-iy)\pi\hat{\mathcal{H}}} \hspace{10pt} \text{where} \hspace{10pt} \hat{\mathcal{H}}=\sum_{j=0}^{N-1} 2^j \hat{Z} + 2^N.
\end{equation} 
this Hamiltonian has $2^N$ unique eigenvalues $E_m = 2m-1 =\{1,3,5,7\ldots(2\cdot2^N - 1)\}$.

\subsection{Applying Harmonic QNNs to Solve an Instance of the Laplace Equation}
\label{app:quantum_laplace}

In the main paper, Sect. 4.4 (Results B), we considered the solution of the Laplace equation for the heat distribution $f(x,y)$ inside a unit-square box refrigerated on all sides (where then the Dirichlet b.c. applicable impose $f|_{\partial \Omega_{1-3}} = 0$) but one (occurring at $x=0$), where we assume the presence of a heater, modelled by imposing a Dirichlet b.c. $f|_{\partial \Omega_{4}} = 1$. \\
The general solution to this problem setting is known in closed form as a series expansion 
\begin{equation}
    f(x,y) = \sum_{n \text{ odd}}^{\infty} \frac{4}{n\pi} e^{-n \pi x} \sin{(n \pi y)} = \frac{2}{\pi} \arctan{\frac{\sin(\pi y)}{\sinh{\pi x}}}
    \label{eq:laplace_square_expansion}
\end{equation}
The exact solution can thus also be numerically approximated; as the coefficients become smaller with the term index, by truncating the sum to a given number of terms $\tilde{N}$, the solution becomes closer and closer to exact. We note that expansions equivalent to \eqref{eq:laplace_square_expansion} exist also for cases where $f|_{\partial \Omega_{4}} = f(y)$, and for the general case of a 2D Laplace equation with Dirichlet b.c., it is expected that alternative expansions can naturally approximate the solution.
However, in general, the analytical function representation of the series is not always known, and therefore numerical truncation methods are the best alternative. Having access to a large number of terms improves the approximation.\\

In Fig.~\ref{fig:SI_plot_qnn} we plot the converged results and the loss convergence profile for a QNN adopting $N=4$ qubits. We find the solution matches very well overall, with only a slight deviation at the $x=0$ boundary, for all $y$, with clear oscillatory behaviour. This can be explained by the so-called Gibbs phenomenon, which is a demonstration that step-function behaviour can in principle be modelled accurately with periodic functions, but the amplitude error remains significant even for a larger cut-off in the series. The step-function is required by the sharp transition $\lim_{y \rightarrow 0^+} f(x=0,y) =1$ whilst $\lim_{y \rightarrow 0^-} f(x=0,y) =0$, which occurs at the corners of the unit-box for $x=0$ due to the discontinuous b.c. imposed.

We observe very similar profiles (results omitted) for the solution represented by a finite sum, with a cut-off at 16 terms.
This confirms the intuition about the qHolomorphic circuit expressivity outlined in the last paragraph: the $N=4$ case employs $2^N=16$ terms so that we expect a performance similar to expressing the solution by introducing the same number of terms in \eqref{eq:laplace_square_expansion} - where the coefficients of the terms are fixed and not variationally optimised. 
This intuitive reasoning signifies the capacity of the qHolomorphic circuit to represent basis functions to the Laplace problem.\\
Finally, we stress how the qHolomorphic architecture converges very rapidly to the approximate solution, in less than 100 epochs.

\begin{figure}
    \centering
    \includegraphics[width=0.7\linewidth]{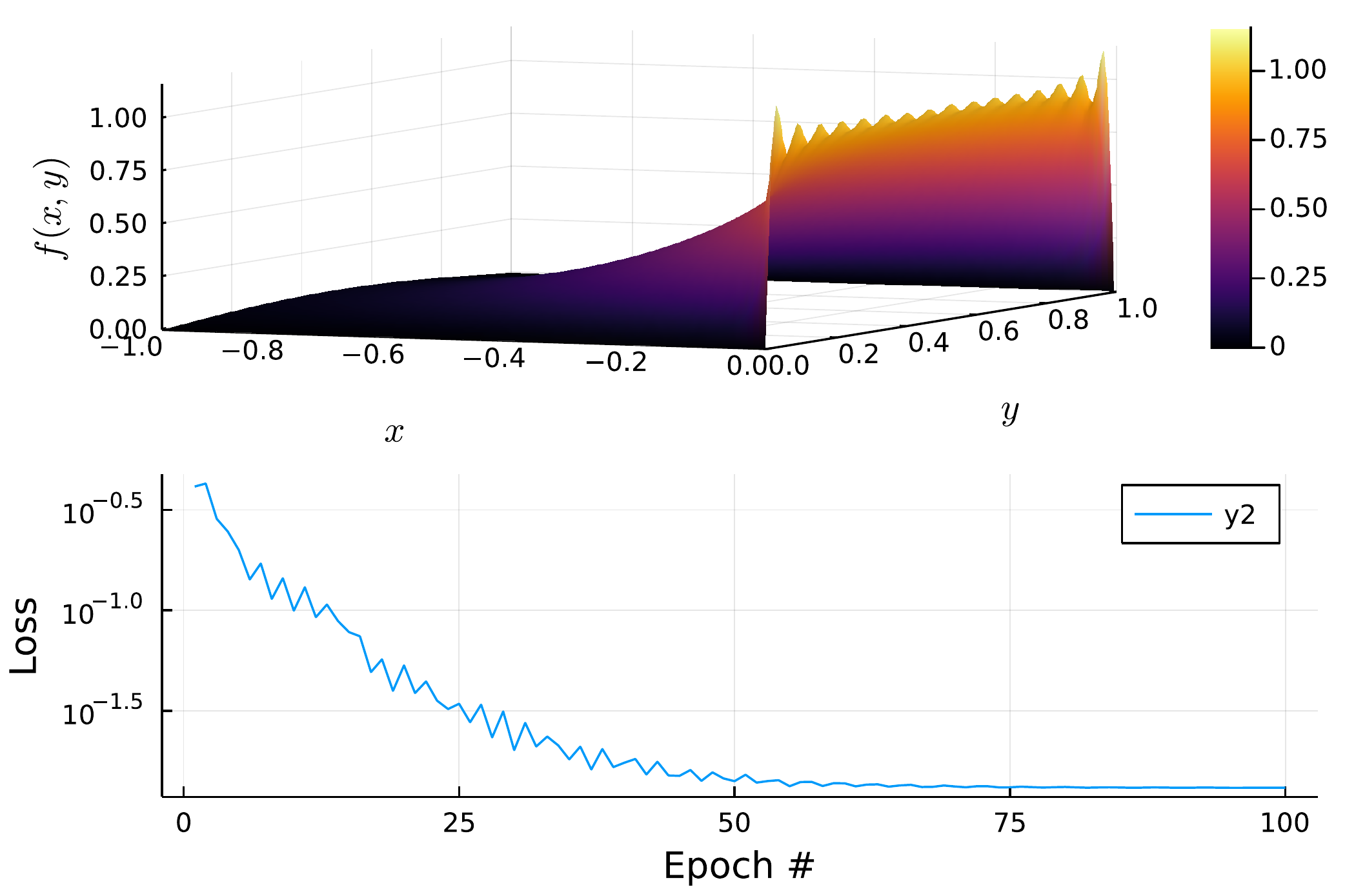}
    \caption{(Top) Converged QNN output for solving Laplace's equation on a unit square domain with boundary conditions setting three sides to $f(x,y)=0$ and one side to $f(x,y)=1$. (Bottom) Loss profile as a function of epoch \# in the QNN training process.}
    \label{fig:SI_plot_qnn}
\end{figure}

\subsection{Quantum Imaginary- and Complex-Time Evolution}
\label{app:qite}
So far we have assumed we can directly substitute a complex number into the $t$ variable representing time evolution over a Hamiltonian. But how would one implement complex-time evolution in a real quantum system? In this section, we will highlight strategies known in the literature as Quantum Imaginary Time Evolution (QITE) and extend these to general Quantum Complex Time Evolution (QCTE).

QITE is a strategy that was first used in classical computing, in particular in the field of computational quantum chemistry, to prepare ground-states/energies for certain types of Hamiltonians, including molecular and lattice Hamiltonians. More recently, it was shown that quantum algorithms could offer a substantial speedup over classical methods. There are many variations of QITE, but two main approaches are the Trotterized-version by Motta et al.  \cite{Motta2020} and the variational approach described by McArdle et al. \cite{McArdle2019}.

In Motta et al. \cite{Motta2020}, the overall imaginary time evolution step is expanded into a series of Trotter steps. Each step evolves over only a very short time. Based on locality properties, one can construct an approximate linear system that is solved on a classical computer to find a proxy unitary operator that can replace the non-unitary step, including the appropriate normalization factor. In our work, that normalization factor would be re-multiplied as a scalar number in classical post-processing to capture a desirable decaying effect.

Because the proposal is already based on a Trotterization-approach, it is possible to show that the QITE can be extended to QCTE by interleaving imaginary-time-evolution Trotter steps with real-time-evolution Trotter steps as
\begin{equation}
    e^{-\beta (\hat{H}_r + i\hat{H}_i)} = (e^{-\Delta \tau \hat{h}_r[1]} e^{-i\Delta \tau \hat{h}_i[1]} e^{-\Delta \tau \hat{h}_r[2]} e^{-i\Delta \tau \hat{h}_i[2]}\ldots)^n + \mathcal{O}(\Delta \tau)
\end{equation}
where we decomposed both $\hat{H}_r$ and $\hat{H}_i$ into sums of mutually commuting operators $\hat{h}_r[m]$ and $\hat{h}_i[m]$ respectively, and $n=\frac{\beta}{\Delta \tau}$.

McArdle et al. \cite{McArdle2019} aims to circumvent the impractical circuit depth requirements of \cite{Motta2020} by taking an approximate, variational approach, now also referred to as VarQITE (Variational QITE). McLachlan's variational principle is used to map the imaginary-time evolved states onto a suitable, chosen variational ansatz, allowing to effectively perform QITE on a quantum device. 
Also in this case, QCTE proposes to replace the real Hamiltonian term by a more general complex Hamiltonian. 
However, future work would still be required to adapt the VarQITE estimate routine of the normalization factor to our case.

\section{Divergence-Free (Quantum) Neural Networks}
\label{app:divfree}

We use exterior derivatives on differential forms to construct divergence-free networks in arbitrary dimensions. To construct divergence-free networks in an $N$-dimensional space, we can represent a general $(N-2)$-form with a neural network. Taking the exterior derivative of this network yields a divergence-free $(N-1)$-form. This follows since
\begin{enumerate}
    \item There is a correspondence between the exterior derivatives of $(N-1)$-forms and divergence operators.
    \item Nested exterior derivatives evaluate to zero.
\end{enumerate}
A full treatment of differential forms lies outside the scope of this work (see e.g. \cite{perot2014, weintraub2014} for more detailed coverage), so we focus instead on practical demonstrations aimed to facilitate the derivation of divergence-free neural networks in further work.

\subsection{Two Dimensions}
\label{app:divfree_2d}
While divergence-free fields in two dimensions might be constructed by inspection, we use the two-dimensional case as a showcase of the methodology involving differential forms.

Start with the definition of a general zero-form:
\begin{equation}\label{eq:2d_zero_form}
f = y\left(x_1, x_2\right),
\end{equation}
on which we take an exterior derivative:
\begin{equation}\label{eq:2d_one_form}
df = \pd{y}{x_2}dx_2 + \pd{y}{x_1}dx_1.
\end{equation}

Consider an exterior derivative on a general $1$-form $g = adx_2 - bdx_1$:
\begin{align}\label{eq:2d_general_one_form}
dg &= \left(\pd{a}{x_1}dx_1 + \pd{a}{x_2}dx_2\right) dx_2 - \left(\pd{b}{x_1}dx_1 + \pd{b}{x_2}dx_2\right) dx_1 \\
  &=  \left(\pd{a}{x_1} + \pd{b}{x_2}\right)dx_1dx_2,
\end{align}
where we use the identity on differential forms that $dx_idx_j = -dx_jdx_i$, which also implies that $dx_idx_i=0$.

This demonstrates the correspondence between the exterior derivative of an ($N-1$)-form and the divergence operator.

Since $d\left(d\left(f\right)\right) = 0$ for any differential form $f$, equating coefficients between $df$ and $g$ then yields a formula for a divergence-free field in two dimensions. So, given a map $(x_1, x_2) \rightarrow y$, then the field:
\begin{equation}\label{eq:2d_divergence_free}
\left(\pd{y}{x_2}, -\pd{y}{x_1}\right)
\end{equation}
is divergence-free.

\subsection{Four Dimensions}
\label{app:divfree_4d}

Since the three-dimensional case is well-handled by standard vector calculus, we turn to four dimensions as a final demonstration of practical calculations of divergence-free fields with differential forms. We note that the increasing complexity of these calculations might make them better suited for calculation using a computer algebra system.

We start by defining a general $2$-form in four dimensions:
\begin{equation}\label{eq:4d_2form}
f = y_1 dx_1dx_2 + y_2 dx_1 dx_3 + y_3 dx_1dx_4 + y_4 dx_2dx_3 + y_5 dx_2dx_4 + y_6 dx_3dx_4.
\end{equation}

Taking exterior derivatives results in:
\begin{align}\label{eq:4d_3form}
df = &\left(\pd{y_4}{x_4} - \pd{y_5}{x_3} +\pd{y_6}{x_2}\right) dx_2dx_3dx_4 \nonumber\\
   +&\left(-\pd{y_2}{x_4} + \pd{y_3}{x_3} - \pd{y_6}{x_1}\right)dx_3dx_4dx_1 \nonumber\\
   +&\left(\pd{y_1}{x_4} - \pd{y_3}{x_2} + \pd{y_5}{x_1}\right)dx_4dx_1dx_2 \nonumber \\
   +&\left(-\pd{y_1}{x_3} +\pd{y_2}{x_2} -\pd{y_4}{x_1}\right)dx_2dx_3dx_4,
\end{align}
where again we use $dx_idx_j = -dx_jdx_i$ and $dx_idx_i=0$.

Considering the exterior derivative of general $3$-form $g = adx_2dx_3dx_4 + bdx_3dx_4dx_1 + cdx_4dx_1dx_2 +ddx_2dx_3dx_4$ yields:
\begin{equation}
    dg = \left(\pd{a}{x_1} + \pd{b}{x_2} + \pd{c}{x_3} + \pd{d}{x_4}\right)dx_1dx_2dx_3dx_4.
\end{equation}

Now, equating coefficients between $df$ and $g$ yields a divergence-free network in four dimensions, i.e. given six-dimensional map $(x_1, x_2, x_3, x_4, x_5, x_6) \rightarrow (y_1, y_2, y_3, y_4, y_5, y_6)$ the map
\begin{equation}\label{eq:curl4d}
\begin{split}
\bigg( & \pd{y_4}{x_4} - \pd{y_5}{x_3} +\pd{y_6}{x_2},\\ -&\pd{y_2}{x_4} + \pd{y_3}{x_3} - \pd{y_6}{x_1},  \\
& \pd{y_1}{x_4} - \pd{y_3}{x_2} + \pd{y_5}{x_1},\\ -&\pd{y_1}{x_3} +\pd{y_2}{x_2} -\pd{y_4}{x_1} \bigg).
\end{split}
\end{equation}
is divergence-free.

 \section{Experimental Details}
 \label{app:exp_details}
 
 The classical neural networks we use are comparatively lightweight, and training for all conventional NNs was done on a single Apple M1 chip running Python 3.9.12 on macOS 12.3.1. 
Equivalently, experiments involving QNNs were run on an AMD\textsuperscript{\copyright} Ryzen 7 3700x processor, in a Python 3.9.5 Conda environment on Ubuntu 18.04.  
Each experiment was repeated 10 times for each method to ensure the repeatability of the outcomes, with the attained average values and standard deviation from the mean reported in the main paper tables.

\paragraph{Quantum Machine Learning} All the QNNs used in this paper are implemented with proprietary code, leveraging upon the packages PyTorch and Yao.jl~\cite{luo2020yao}. 
Details of the various components introduced in this paragraph can be found in~\cite{kyriienko2021dqc}.
We consider two types of QNNs: 
for the first, of type \eqref{eq:simply_connected_qnn} with $|\Psi_{\bm \theta,x,y}\rangle = \hat{\mathcal{U}}_{\bm \theta_2}\textrm{QFM}(x,y)\hat{\mathcal{U}}_{\bm \theta_1}|\emptyset\rangle$, we choose to decompose the QFM as two parallel \emph{Chebyshev-tower} QFM's that apply a $Y$ rotation gate, each spanning 4 qubits. These encoding gates are sandwiched between two layers of variational blocks: a pair of parallel 4-qubit variational blocks before, and a single 8-qubit block after the QFM. All variational blocks employ $\texttt{CNOT}$ entangling gates interleaved with single-qubit parametrized rotations. The training was performed by a hybrid optimization scheme comprising 1200 epochs of Adam \cite{kingma2014}, followed by 300 epochs of L-BFGS~\cite{liu1989}, both with a learning rate $0.05$. To extract information at the end of the evolution, we employ the total magnetization operator. 
The overall circuit diagram is illustrated for clarity in the Appendix.
The second QNN that we use is the quantum harmonic neural network of type \eqref{eq:harmonic_simply_connected_qnn} with a complex-exponential QFM, N=4 qubits, and VB's $\hat{\mathcal{U}}_{\theta_k}$ with depths 8 for both $k=\{1,2\}$.

\begin{figure}
    \centering
    \includegraphics[width=0.7\linewidth]{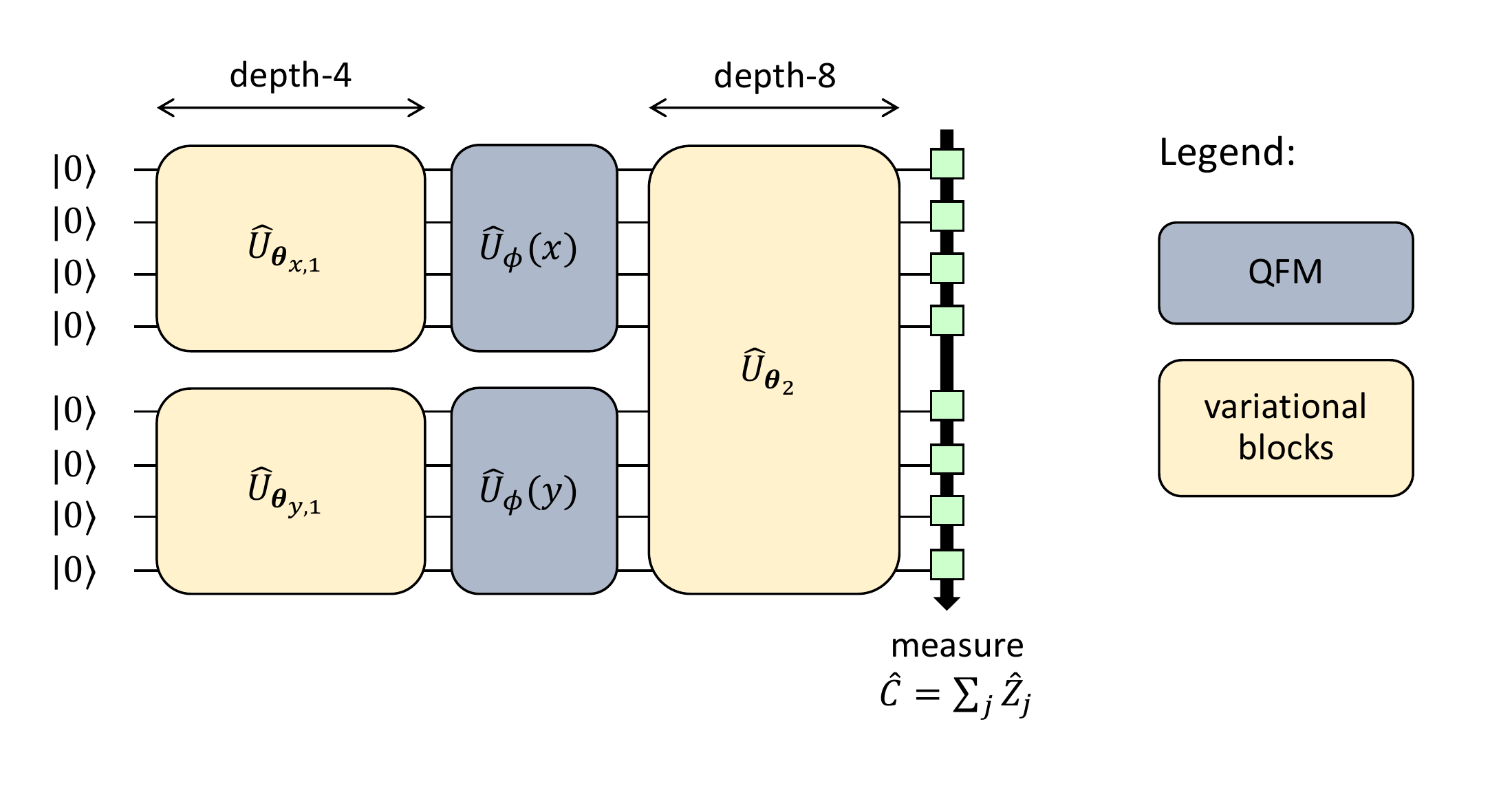}
    \caption{The overall 8-qubit QNN circuit structure, employed as described in Sect. 3.2 of the main paper. Further details on the Quantum Feature Map (QFM) and (hardware-efficient) variational blocks can be found in \cite{kyriienko2021dqc}.
    }
    \label{fig:SI_circuit_qnn}
\end{figure}


\begin{figure*}[h!]
\centering
\includegraphics[width=.98\textwidth]{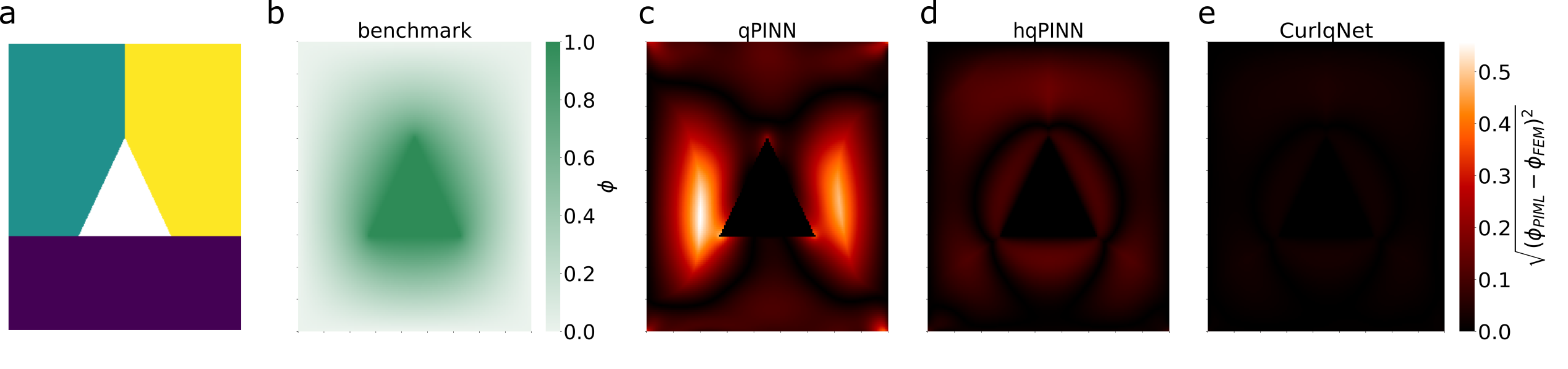}
\caption{
(a) A partitioning of the multiply-connected domain in Sect.\ref{sec:application_heat} into three simply-connected domains on which we define our multiholomorphic and XPINN networks. Note that the white triangle in the centre is not considered a part of the domain.
(b) The solution $\phi(x,y)_{\textrm{FEM}}$ of the problem of a triangular heater positioned in a square box with Dirichlet boundary conditions, as provided by a FEM solver benchmark.
(c-e) A comparison of the performance for different QNN architectures, by plotting the RSE when comparing against $\phi(x,y)_{\textrm{FEM}}$ the solution $\phi(x,y)_{\textrm{PIML}}$ provided respectively by (c) a na{\"i}ve (qPINN), (d) an hqPINN and (e) a CurlqNet architecture.
Additional details and definitions are provided in the main text. 
All plots report a sample of the reference metric on a uniform $150\times 150$ grid. Points within the subdomain belonging to the heater have been excluded from the analysis. 
}
\label{fig:dqc_comparative_performances}
\end{figure*}

\begin{figure*}[h!]
\centering
\includegraphics[width=.9\textwidth]{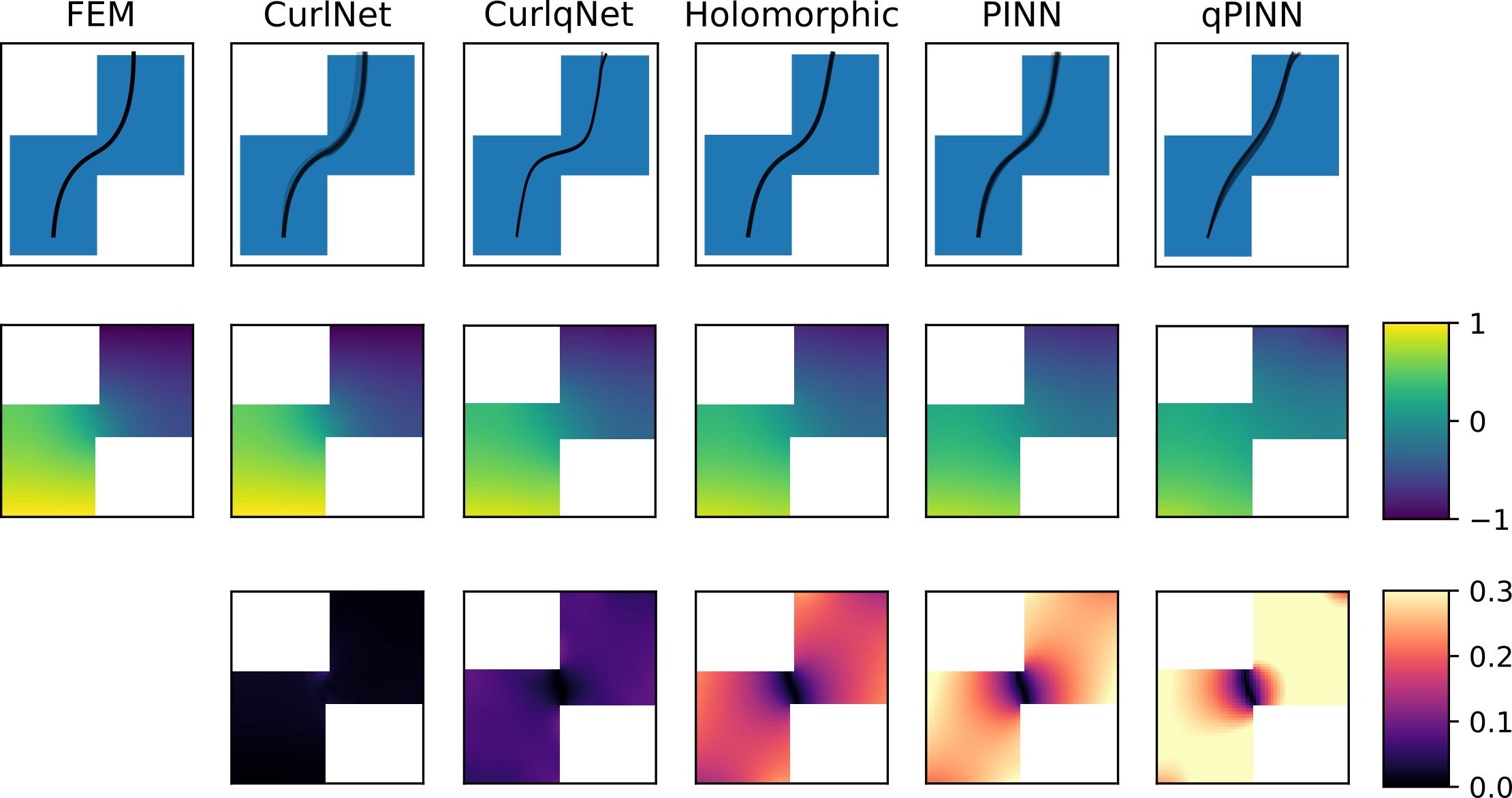}
\caption{
Figures generated from the robot navigation task in Sect.\ref{sec:application_robot} over 10 different network initialisations. First row: robot navigation paths overlapped, with visualization of the domain. Second row: the potential $\phi$ generated to guide the robot, as averaged over the 10 exemplary runs. Third row: the absolute error between the potential $\phi$ generated for each method, and the corresponding finite elements solution. First column: reference finite element solutions, second to sixth columns: results attained with corresponding neural architectures.
}
\label{fig:classical_robot_results_fig}
\end{figure*}

%

\end{document}